\documentclass[sigconf,nonacm]{aamas}

\usepackage{balance} 
\usepackage{enumitem}
\usepackage{algorithm}
\usepackage{algpseudocode}
\usepackage{siunitx}
\usepackage{microtype}

\setlist{nosep,topsep=2pt,partopsep=0pt,itemsep=1pt,parsep=0pt}

\allowdisplaybreaks

\doi{SEHD8631}

\makeatletter
\gdef\@copyrightpermission{
  \begin{minipage}{0.2\columnwidth}
   \href{https://creativecommons.org/licenses/by/4.0/}{\includegraphics[width=0.90\textwidth]{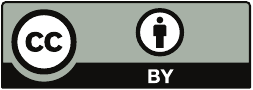}}
  \end{minipage}\hfill
  \begin{minipage}{0.8\columnwidth}
   \href{https://creativecommons.org/licenses/by/4.0/}{This work is licensed under a Creative Commons Attribution International 4.0 License.}
  \end{minipage}
  \vspace{5pt}
}
\makeatother

\setcopyright{ifaamas}
\acmConference[AAMAS '26]{Proc.\@ of the 25th International Conference
on Autonomous Agents and Multiagent Systems (AAMAS 2026)}{May 25 -- 29, 2026}
{Paphos, Cyprus}{C.~Amato, L.~Dennis, V.~Mascardi, J.~Thangarajah (eds.)}
\copyrightyear{2026}
\acmYear{2026}
\acmDOI{}
\acmPrice{}
\acmISBN{}

\acmSubmissionID{115}

\title[ReCORS]{Conformal Reachability for Safe Control in Unknown Environments}

\author{Xinhang Ma}
\affiliation{
  \institution{Washington University in St. Louis}
  \city{St. Louis}
  \country{USA}}
\email{m.owen@wustl.edu}

\author{Junlin Wu}
\affiliation{
  \institution{Washington University in St. Louis}
  \city{St. Louis}
  \country{USA}}
\email{junlin.wu@wustl.edu}

\author{Yiannis Kantaros}
\affiliation{
  \institution{Washington University in St. Louis}
  \city{St. Louis}
  \country{USA}}
\email{ioannisk@wustl.edu}

\author{Yevgeniy Vorobeychik}
\affiliation{
  \institution{Washington University in St. Louis}
  \city{St. Louis}
  \country{USA}}
\email{yvorobeychik@wustl.edu}

\begin{abstract}
Designing provably safe control is a core problem in trustworthy autonomy. 
However, most prior work in this regard assumes either that the system dynamics are known or deterministic, or that the state and action space are finite, significantly limiting application scope.
We address this limitation by developing a probabilistic verification framework for unknown dynamical systems which combines conformal prediction with reachability analysis.
In particular, we use conformal prediction to obtain valid uncertainty intervals for the unknown dynamics at each time step, with reachability then verifying whether safety is maintained within the conformal uncertainty bounds.
Next, we develop an algorithmic approach for training control policies that optimize nominal reward while also maximizing the planning horizon with sound probabilistic safety guarantees.
We evaluate the proposed approach in seven safe control settings spanning four domains---cartpole, lane following, drone control, and safe navigation---for both affine and nonlinear safety specifications.
Our experiments show that the policies we learn achieve the strongest provable safety guarantees while still maintaining high average reward.
\end{abstract}

\keywords{Safe control; conformal reachability; verified safety}

\newcommand{\BibTeX}{\rm B\kern-.05em{\sc i\kern-.025em b}\kern-.08em\TeX}

\newcommand{\owen}[1]{\textcolor{orange}{Owen: #1}}

\newtheorem{theorem}{Theorem}

\newcommand{\mc}[1]{\mathcal{#1}}

\begin{document}


\pagestyle{fancy}
\fancyhead{}


\maketitle 


\section{Introduction}

Trustworthy control of autonomous systems in dynamic environments is a fundamental problem in a broad range of domains including autonomous driving, drone navigation, and robotic manipulation, among others.
While reinforcement learning techniques have made significant strides in advancing autonomy capabilities in these domains~\cite{azar2021drone,han2023survey,kiran2021deep,xu2024omnidrones,zhang2021reinforcement}, their trustworthiness is opportunistic, insofar as they tend to combine performance objectives and safety considerations into a a single reward function, and achieve high reward accounting for safety only on average.
In safety critical environments, such as self-driving, more rigorous safety guarantees are commonly expected.

An extensive literature has emerged to address this issue, addressing both the problem of verifying safety~\cite{ames2019control,ames2016control,ivanov2019verisig,jackson2020safety,tran2019safety,zhang2024seev, zhang2023exact}, as well as synthesizing safe control policies (using both formal methods and learning techniques)~\cite{ganai2023iterative,gu2022review,ma2022conservative,wu2024verified,yu2022reachability}.
However, verification techniques typically assume known or deterministic dynamics, and many suffer from severe scalability limitations when dynamics are nonlinear, particularly for complex (e.g., neural network) control policies.
Similarly, synthesis techniques, including those using safe reinforcement learning methods, either make very strong assumptions on the dynamical system  (for example, assuming that it is known and deterministic)~\cite{wu2024verified}, assume a finite state or action space~\cite{chow2018lyapunov,jackson2020safety}, or achieve safety only approximately~\cite{ganai2023iterative,ma2021feasible,xu2021crpo,yu2022reachability,yu2023safe} (for example, by relying on learned estimates of the probability that safety is violated~\cite{ganai2023iterative}), on average (as in the constrained Markov decision process (CMDP) framework)~\cite{altman2021constrained,tessler2018reward,yang2020projection}, or in the limit~\cite{ganai2023iterative,yu2022reachability}.
Moreover, while a number of approaches have been proposed to deal with unknown dynamical systems, 
most still rely on restrictive assumptions to obtain provable safety guarantees, 
such as
assuming that dynamics are deterministic~\cite{jackson2020safety,wang2024providing,zhou2022neural},
constraining approximation models to specific forms such as Gaussian process regression~\cite{jackson2020safety},
or requiring Lipschitz continuity~\cite{zhou2022neural}.

We present a novel approach for learning policies in unknown dynamical systems  that approximately maximize rewards with rigorous probabilistic safety guarantees.
Our approach combines model-based RL---we learn a Markovian approximation of the dynamics---with conformal safety analysis (CSA) to obtain safety guarantees, and model-free RL to optimize cumulative rewards.
By jointly training the dynamics model and policy using a loss that incentivizes safety improvements over a target decision horizon, our approach facilitates a finer-granularity tradeoff between the nature of (probabilistic) safety guarantees and rewards than prior art.
Moreover, the use of conformal prediction (CP) as a key building block allows us to make no assumptions on the nature of the underlying dynamics (which we allow, for example, to be stochastic) or the model architecture used for learning it.
Finally, our method provides sound probabilistic safety guarantees by combining three technical elements:
valid uncertainty bounds for dynamics prediction,
finite-horizon reachability analysis,
and probabilistic tail bounds with respect to the unknown initial state distribution.
Our CSA approach builds on recent literature combining conformal prediction with reachability analysis~\cite{chakraborty2025safety,hashemi2023data,hashemi2024statistical,lindemann2024formal,muthali2023multi}, but unlike most prior efforts we make no smoothness assumptions on the underlying dynamics, allow verification for an arbitrary horizon post surrogate model training, and develop novel training methods for both surrogate dynamics and policy that obtain strong safety guarantees.

We evaluate the proposed approach experimentally on seven safe control experiment settings, comparing it to state of the art safe RL baselines.
Our results show that our approach achieves state of the art safety guarantees while maintaining high reward.

In summary, our main contributions are as follows:
\begin{enumerate}[nosep]
    \item \emph{Conformal Safety Analysis (CSA)} for safe reinforcement learning in unknown dynamical systems. In particular, our CSA framework combines conformal prediction with reachability analysis for learning-based controllers in systems with \emph{unknown dynamics}, which we augment with distribution-free tail bounds for an unknown initial state distribution to obtain rigorous probabilistic safety guarantees.
    \item A novel framework for learning provably safe policies that (approximately) maximize cumulative reward subject to safety constraints. Our approach integrates model-free RL and model-based reachability estimation with quantified uncertainties to enforce hard safety constraints. 
    \item Comprehensive experiments across seven environments, including both linear and nonlinear safety specifications, showing that our approach achieves high rewards while providing verifiable probabilistic safety guarantees.
\end{enumerate}

\section{Related Work}\label{sec:related_work}
Ensuring safety in dynamical systems 
has been classically modeled with Constrained Markov Decision Processes (CMDPs) \cite{altman2021constrained}, which seek to maximize expected rewards while constraining cumulative costs below a predefined threshold. Common approaches for solving CMDPs include Lagrangian and penalty methods, as well as constrained policy optimization~\citep{achiam2017constrained, jayant2022model, ma2022conservative, so2023solving, stooke2020responsive, yu2022reachability}. While these methods are effective at balancing reward and cost, they primarily enforce empirical safety and lack formal safety guarantees.

To address this limitation, extensive literature has also emerged for obtaining provably safe or stable control, particularly for policies that are represented as neural networks
.
Many of these rely on control barrier functions, which are themselves difficult to synthesize or verify at scale~\cite{cheng2019end,dawson2022safe,wang2023enforcing,zhao2023probabilistic}.
Others proposed reinforcement learning-based training algorithms to obtain policies with provable safety and stability guarantees~\cite{wu2024verified,zhou2022neural}, but these scale poorly with the number of states, and either assume known or deterministic dynamics.
Data-driven methods such as~\cite{fan2019formal, fan2019data, fan2017dryvr, salamati2022data} use simulation-based reachability or scenario optimization, but are often conservative and require large sample sizes to achieve high confidence. 
Our approach, in contrast, uses conformal prediction to provide statistically calibrated, distribution-free uncertainty quantification.
This enables scalable and model-agnostic verifications under any model errors. 

A number of recent approaches combine reachability analysis with conformal prediction, as we do~\cite{hashemi2024statistical,lindemann2024formal,muthali2023multi}, with several also addressing the issue of unknown dynamics~\cite{chakraborty2025safety,hashemi2023data}.
However, these either assume smooth dynamics or a fixed decision horizon $K$.
In contrast, our approach trains Markovian surrogate models that can subsequently be applied for any $K$, and we make no assumptions on the underlying dynamics.
Moreover, unlike prior work, we train verifiable combinations of surrogate dynamics and policies.

Another class of approaches for verifying safety makes use of Hamilton-Jacobi (HJ) reachability techniques.
These compute exact reachable/avoidable sets via PDE solutions, but do not scale well and typically assume knowledge of system dynamics~\cite{fisac2019bridging,kochdumper2023provably, selim2022safe, yu2023safe}.
Yu et al.~\cite{yu2022reachability} proposed an approach that guarantees safety when agent is in the feasible set,
while Ganai et al.~\cite{ganai2023iterative} extended it to general stochastic settings.
However, our framework is fundamentally different:
instead of solving the computationally expensive PDEs,
we adopt a discrete-time finite-step forward reachability formulation that scales better with system complexity.
Moreover, our use of conformal prediction enables us to apply our approach without making any assumptions about unknown dynamics.








\section{Preliminaries}\label{sec:preliminaries}

\subsection{Deep Reinforcement Learning}
\label{S:drl}
As our approach includes features of model-free reinforcement learning (RL), we provide a brief overview of the main concepts.
Generally, such approaches train a critic $V_{\theta_c}$ (e.g., value or action-value function) and an actor (policy) $\pi_{\theta_a}$, in some cases including also an entropy term $\mc{H}$ that encourages exploration, with the general loss function of the form
\[
\mathcal{J}(\theta_c,\theta_a) = \mathbb{E}_{s,a,r,s'}[\mathcal{L}_c(\theta_c) + \alpha_1 \mathcal{L}_a(\theta_a) + \alpha_2 \mathcal{H}[\pi_{\theta_a}]]
\]
for observed sequences of states and actions $s,a,s'$ and rewards $r$.
In actor-critic methods in particular (such as PPO~\cite{schulman2017proximal}), a common approach is to define an advantage function 
$A(s,a) = r + \gamma V_{\theta_c}(s') - V_{\theta_c}(s)$, 
where $\gamma$ is the discount factor,
and $r$ and $s'$ stem from taking action $a$, with $\mc{L}_c(\theta_c) = A(s,a)^2$ and $\mc{L}_a(\theta_a)$ is an algorithm-specific function of $A(s,a)$.

\subsection{Conformal Prediction}
\label{S:cp}

Conformal prediction (CP) is a distribution-free and model-agnostic statistical uncertainty quantification technique.
As a result, it has come to be particularly useful in supervised learning, 
where it can provide valid statistical guarantees for complex models such as deep neural networks.
Of particular interest in our context is \emph{regression learning}, which we use to learn a model of system dynamics in a state space that is embedded in a Euclidean space, and for which CP provides 
valid prediction intervals~\cite{vovk2005algorithmic}. 
Here we provide an overview of conformal prediction in the form of split conformal prediction, which is most widely used. For a more comprehensive review and more examples, we refer the readers to~\cite{angelopoulos2021gentle}.

Given a general input $x$ and output $y$, 
first define the score function $s(x,y) \in \mathbb{R}$ where larger scores reflect larger uncertainties (larger errors between $x$ and $y$).
Let $\alpha \in (0,1)$ be a user-specified miscoverage level that represents the desired probability of error in the uncertainty estimates.
We compute $\hat{q}$ as the $\frac{\lceil (n+1)(1-\alpha) \rceil}{n}$th quantile of the scores of a calibration set $\{s_i =s(x_i,y_i)\}$ for $1 \leq i \leq n$.
Finally, for a new input $x_{\text{test}}$, we use $\hat{q}$ to provide the uncertainty interval for the regression model $\hat{f}$ as
\begin{equation}
    \mathcal{C}(x_{\text{test}}) = [\hat{f}(x_{\text{test}})-\hat{q}, \hat{f}(x_{\text{test}})+\hat{q}]
\end{equation}
Crucially, the interval $\mathcal{C}(x_{\text{test}})$ achieves 
a $\mathbf{P}( y_{\text{test}} \in \mathcal{C}(x_{\text{test}})) \geq 1-\alpha$ 
coverage guarantee~\cite{vovk2005algorithmic}.


Because standard split conformal prediction assumes exchangeable nonconformity scores $s_0,\ldots,s_n$, it warrants adaptation when applied in time-series prediction.
A naive example is to use a union bound over the prediction horizon to connect per-step and full-trajectory bounds.
However, this can result in bounds that are overly conservative.

To obtain tighter conformal trajectory bounds, we make use of the approach by Cleaveland et al.~\cite{cleaveland2024conformal}, which learns a trajectory-level nonconformity score that can directly be optimized for efficiency.
Specifically, one can define
\[
s = \max (w_1 s(x_1,y_1),\ldots,w_Ts(x_T,y_T))
\]
where $w_1,\ldots,w_T \geq 0$ are parameters.
Cleaveland et al formulate
the problem of finding the parameters $w$ as the following optimization problem:
\begin{align*}
    &\min_{w \ge 0} \quad\text{Quantile}(\{s^{(1)},\ldots,s^{(n)}\},1-\alpha) \\
    &\text{s.t.} \quad s^{(i)}=\max(w_1s_1^{(i)},\ldots,w_Ts_T^{(i)})\ \forall \ i; \quad \sum_{j=1}^Tw_j=1,
\end{align*}
where $s^{(i)}$ is the score for the $i$-th trajectory in a held-out dataset.
The details of solving this optimization problem efficiently can be found in Section 4 of \cite{cleaveland2024conformal}.
Importantly, conformal region computed using this nonconformity score with optimized $w$ parameters maintains the desired coverage guarantee (Theorem 3 in \cite{cleaveland2024conformal}).

\section{Model}\label{sec:model}
We consider a safety-constrained Markov decision process (MDP) (SMDP), defined by a tuple $\mathcal{M} := \langle \mathcal{S}, \mathcal{A}, P, \rho, r, h, K, \gamma \rangle$, 
where $\mathcal{S} \subseteq \mathbb{R}^n$ is the state space, 
$\mathcal{A}$ the action space, 
$P: \mathcal{S} \times \mathcal{A} \rightarrow \mc{P}(\mathcal{S})$ the transition distribution (i.e. the environment dynamics) which maps current state and action to a distribution $\mc{P}$ over states, 
$\rho$ the initial state distribution,
$r: \mathcal{S} \times \mathcal{A} \rightarrow \mathbb{R}$  the reward function assigning a real value reward to each state-action pair, 
$h: \mathcal{S} \rightarrow \mathbb{R}^{m}_+$ is the cost function assigning a real vector to a state representing its associated $m$-dimensional safety value (e.g., corresponding to $m$ safety constraints, such as $m$ distinct obstacles to avoid), $K$ the decision horizon, 
and $\gamma \in (0,1)$ is the discount factor.
Thus, $h(s) \le 0$ pointwise implies that the state $s \in \mc{S}$ is \emph{safe} (safety cost is non-negative) while $h(s) > 0$ means that it is \emph{unsafe}.
Let $\mathcal{S}_s \subseteq \mc{S} := \{s \in \mathcal{S} : h(s)\le 0\}$ denote the set of safe states, while $\mathcal{S}_u:=\{s \in \mathcal{S}: h(s)>0\}$ is the set of unsafe states.
Correspondingly, the SMDP imposes a safety constraint that the system dynamics remain safe until the safety horizon $K$. 
Formally, for any trajectory of states $\tau = \{s_0,\ldots,s_K\}$, we wish to ensure that $s_t \in \mc{S}_s$ for all $1 \le t \le K$.
We assume that the support of $\rho$ is over $\mc{S}_s$ (that is, all initial states are safe).
If this constraint holds, we say that the system is \emph{safe}.
We remark that the safety constraints in SMDPs are distinct from the typical nature of constraints in \emph{constrained MDPS (CMDPs)}, which accrue (discounted) costs additively over time analogously to rewards in the objective.
Many algorithmic frameworks for CMDPs leverage this structure (for example, using Lagrangian methods to transform it into an MDP).
In contrast, the safety constraints in SMDPs are non-additive, and such algorithmic approaches cannot be leveraged without the use of additive proxies.

Let $\pi:\mc{S} \rightarrow \mc{A}$ be a deterministic Markov stationary policy, and $\Pi$ the set of all such policies.
As is conventional in MDPs, we define the state-conditional \emph{policy value function} $V^\pi(s)$ as the discounted sum of rewards, i.e., 
\[
V^\pi(s)=\mathbb{E}\left[\sum_{t=0}^K\gamma^t r(s_t,a_t)|s_0=s, s_t \sim P(s_{t-1},a_{t-1}), a_t = \pi(s_t)\right].
\]
Informally, our goal is to find a policy $\pi$ that maximizes $V^\pi(s)$ while ensuring that the system is safe.
However, we suppose that we do not know the system dynamics $P$ or reward $r$.
While we can use conventional reinforcement learning approaches to learn a near-optimal policy in an MDP without safety considerations, guaranteeing perfect safety is impractical.
Instead, we aim to achieve safety \emph{with high probability}.
Define 
\[
\Sigma^\pi = \Pr\left[s_t \in \mathcal{S}_s, \forall t \in \{0,1,\ldots,K\}\right]
\]
where $s_0 \sim \rho, s_t \sim P(s_{t-1},a_{t-1}), a_t = \pi(s_t)$,
that is, $\Sigma^\pi$ is the probability that the entire trajectory of horizon $K$ generated by a policy $\pi$ is safe, accounting for all relevant uncertainty (which we will make more precise in Section~\ref{S:cra}).

\emph{Our goal is to solve the following constrained optimization problem}:
\begin{equation}
\label{E:problem}
\max_{\pi \in \Pi} \quad \mathbb{E}_{s \sim \rho} [V^\pi(s)] \quad \mathrm{s.t.: } \quad \Sigma^\pi \ge 1-\zeta
\end{equation}
for a target safety tolerance level $\zeta$.
Crucially, while we can accept approximately optimal policies $\pi$ in terms of the objective, we wish to obtain \emph{verified} safety at the desired tolerance, since these often capture considerations that have high stakes.
For example, rewards would naturally capture efficiency, such as getting to the destination quickly, whereas safety constraints focus on avoiding traffic accidents that may potentially lead to fatalities.

Solving Problem~\eqref{E:problem} entails two considerations: 1) obtaining sound probabilistic safety guarantees for a given deterministic policy $\pi$ as expressed in the safety constraint (which we call \emph{verification}) and 2) synthesizing (or learning) a policy that is approximately optimal while also satisfying the safety constraint for the target decision horizon $K$ and tolerance $\zeta$.
Moreover, we must accomplish both goals \emph{without knowing the true dynamics $P$ and reward $r$.}
We tackle these problems next.

\section{Solution Approach}\label{sec:approach}







In this section, 
we introduce our approach, \emph{ReCORS (\underline{R}eachability with \underline{CO}nformal uncertainty sets for \underline{R}einforcement learning with \underline{S}afety guarantees)}, 
which involves three main components:

\begin{enumerate}
    \item Approximating the true system dynamics $P$ with a neural network $\hat{f}_\beta$ that is optimized for accuracy in safety-critical state dimensions. 

    \item A novel integration of conformal prediction with reachability analysis performed using the learned dynamics model $\hat{f}_\beta$.
    The resulting \emph{conformal safety analysis (CSA)} enables verification of whether the system is likely to remain in safe set $\mathcal{S}_s$ over horizon $K$ with a pre-defined confidence level $(1-\alpha)$. 

    \item A fully differentiable pipeline allowing end-to-end gradient-based learning of a parametric control policy $\pi_\beta$: 
    we use a neural network $\eta_\omega$ to approximate the conformal prediction intervals, 
    allowing the safety objective (derived from $\hat{f}_\beta$ and $\eta_\omega$) to be directly integrated as a differentiable objective term in any gradient-based policy optimization algorithm. 
\end{enumerate}
Notably, our approach is a novel blend of model-based and model-free reinforcement learning.
In the model-based component, we learn the model of the dynamics (but not the rewards), which is pivotal for training with probabilistic safety guarantees and associated verification.
Our learning pipeline then leverages a combination of model-free RL (e.g., PPO) to maximize rewards with the model-based component focusing on optimizing for provable safety.
We detail the components of our approach next.

\subsection{Learning System Dynamics with Safety Specifications}

A key component of our approach is training a parametric discrete-time dynamics model $\hat{f}_\beta(\hat{s},a)$ with parameters $\beta$, which \emph{deterministically} maps state $\hat{s}$ and action $a$ to predicted next state $\hat{s}'$.
Thus, our \emph{learned} approximate dynamical system model has the form
\begin{equation}
\label{E:approx_dyn}
\hat{s}_{t+1} = \hat{f}_\beta(\hat{s}_t,a_t),
\end{equation}
where $\hat{s}_t$ and $a_t$ are the (previously predicted) state and action taken at time step $t$, respectively, and $\hat{s}_{t+1}$ is the predicted next state.
Notably, even though the underlying dynamics may be stochastic, while $\hat{f}$ is deterministic, our use of conformal prediction will account for the concomitant prediction uncertainty (essentially combining both the prediction error and any underlying uncertainty of the dynamics itself); we discuss this further below.

To train the dynamical system model, we collect a set of ground-truth trajectories $\{\tilde{\tau}_l\}$ where $\tilde{\tau}_l = (s_{l0},a_{l0},s_{l1},\ldots,s_{lK-1},a_{lK-1},s_{lK})$ by sampling initial states $s_0$ i.i.d.~from $\rho$.
This induces a dataset of $N$ tuples $\mc{D} = \{(s_i,a_i,s_i')\}_{i=1}^N$ that we can use for training.
Notably, we train the dynamics $\hat{f}$ in two phases.
First, we pre-train $\hat{f}$ by generating the trajectory data by sampling actions $a_t$ uniformly at random.
Subsequently, we keep fine-tuning the dynamics model as part of the proposed RL approach (see below), where in each training epoch we resample the trajectory data using the latest policy $\pi_\beta$ obtained during the RL training.

The loss function we use for training $\hat{f}_\beta$ is a variant of $\ell_2$ loss.
Since a major concern in our context is safety, we modify the standard $\ell_2$ loss to account for safety considerations by adding weights that amplify the importance of safety-critical dimensions of the state space.
Specifically, let $\phi_j(h(s)) > 0$ be a positive scalar that measures the safety criticality of dimension $j$ with respect to the safety function $h(s)$.
In particular, suppose without loss of generality that the $i$th safety constraint has the form $h_i(s) = \bar{h}_i(s) + b_i$ for some scalar $b_i$.
We propose the following measure of safety criticality of dimension $j$: 
\[
\phi_j(h(s)) = \prod_{i=1}^m\left(\frac{|\partial \bar{h}_i(s)/\partial s_j|}{|b_i|}+1\right),
\]
where $|\cdot|$ is an absolution value.
For example, if $h(s) = As - b$, 
$\partial \bar{h}_i(s)/\partial s_j = A_{ij}$.
Intuitively, $\phi_j$ introduces a higher weight for dimensions which are more likely to induce safety violations.

Next, define the safety weight of dimension $j$ as $w_j = \phi_j(h(s))$.
Our training loss then becomes
\begin{equation}\label{eq:dynamics_loss}
    \mathcal{L}_{\text{dyn}}(\beta) = \frac{1}{N}\sum_{i=1}^{N} \|\hat{f}_\beta(s_i,a_i) - s_i'\|_W,
\end{equation}
where $W$ is a diagonal matrix with $w_j$s along the diagonal, and $\|x\|_W = x^T W x$.

The expected effect of adding such safety weights as part of the loss function used to train the dynamics is two-fold.
On the one hand, it will incentivize training to focus on making better predictions along the dimensions that are more safety-critical.
On the other hand, this may increase overall (unweighted) prediction error, which would in turn lead to reduced reward.
This tradeoff is deliberate: while we aim to maximize total reward, we prioritize \emph{verified safety}.
Notably, our approach is heuristic, to be sure, but as our ablations in Section~\ref{S:exp_ablations} demonstrate, it has the intended effect of improving verified safety.


\subsection{Conformal Safety Analysis}
\label{S:cra}

Consider the approximate dynamical system model in Equation~\eqref{E:approx_dyn} with a known initial state $s_0$.
We can represent this as a collection of $K$ predictions, each corresponding to a model that predicts the $t$th state in the sequence for $t \in [1,K]$ given an initial state $s_0$ and a policy $\pi_\theta$, that is,
\begin{align}
\hat{s}_{t} = \hat{f}_\beta^{(t)}(s_0;\pi_\theta)
\end{align}
for $t > 0$, 
where $\hat{s}_1=\hat{f}_\beta^{(1)}(s_0;\pi_\theta) = \hat{f}_\beta(s_0,\pi_\theta(s_0))$, $\hat{s}_2=\hat{f}_\beta^{(2)}(s_0;\pi_\theta) = \hat{f}_\beta(\hat{s}_1,\pi_\theta(\hat{s}_1))$, and so on, with $a_t = \pi_\theta(\hat{s}_{t})$.
In other words, given a policy $\pi_\theta$, the predicted dynamical system model $\hat{f}_\beta$ effectively gives rise to $K$ separate state predictions, each with $s_0$ as input.
Moreover, if $s_0$ are sampled i.i.d.~from the initial state distribution $\rho$,
we can apply conformal prediction bounds for each step $t$ to obtain
\begin{equation}
\label{E:conformal_k}
    \mathbf{P} [\|s_{t} - \hat{f}_\beta^{(t)}(s_0, \pi_\theta)\| \le \eta_t] \geq 1-\alpha_t,
\end{equation}
where $s_{t}$ is a true state generated by the actual system dynamics after $t$ steps starting at $s_0$, $\alpha_t$ 
the error tolerance rate, 
and $\eta_t$ the conformity score obtained using a calibration dataset to ensure the validity of~\eqref{E:conformal_k}.

However, obtaining conformal bounds for each $t$ is insufficient, since our goal is to obtain such bounds over $K$-step trajectories, and predictions can be arbitrarily correlated across time steps.
Specifically, let $S_t(s_0;\pi_\theta) = \{s | \|s_{t} - \hat{f}_\beta^{(t)}(s_0, \pi_\theta)\| \le \eta_t\}$ denote the set of states that can be reached after exactly $t$ steps given the conformity score $\eta_t$ if we start at $s_0$ and follow the policy $\pi_\theta$.
Let $S(s_0;\pi_\theta) = \cup_{t \in [1,K]} S_t(s_0;\pi_\theta)$.
We can then observe that if $S(s_0;\pi_\theta) \cap \mc{S}_u = \emptyset$, the policy $\pi_\theta$ is safe, modulo the predictive uncertainty of $\hat{f}$.
Our goal is to obtain this claim rigorously for a given policy $\pi_\theta$, with a target probability $1-\zeta$.
One way to do this is to simply apply a union bound to the step-wise conformal bounds $\alpha_t$ over all steps $t$.
However, this is often too lose.
As an alternative, we additionally consider a more efficient time-series conformal bound proposed by Cleaveland et al.~\cite{cleaveland2024conformal}.
Armed with a time-series-level conformal bound, we obtain the following result, which borrows the probabilistic reasoning with respect to the initial state distribution from semi-probabilistic verification~\cite{ma2025learning} and gives us a sound probabilistic safety bound, accounting for uncertainty about (a) initial state, (b) imperfect predictions by $\hat{f}_\beta$, and (c) samples of finite datasets of $s_0$ used to obtain empirical safety.


\begin{theorem}\label{thm:hoeffding_bound}
Let $\{s_0^i\}_{i=1}^N$ be i.i.d.\ samples from $\rho$, independent of the calibration data used to construct the conformal bounds. 
Suppose the conformal procedure provides a \emph{trajectory-level} coverage guarantee that for every fixed initial state $s_0$,
\(
\Pr\!\big(S(s_0;\pi_\theta)\cap\mathcal S_u=\emptyset\big)\;\ge\; 1-\alpha.
\)
Let $V=\{s_0^i : S(s_0^i;\pi_\theta)\cap \mathcal S_u = \emptyset\}$. 
Then, for any $\delta\in(0,1)$,
\[
\Sigma^{\pi_{\theta}}
\;\ge\; \left(\frac{|V|}{N} \;-\; \sqrt{\tfrac{1}{2N}\log\tfrac{2}{\delta}}\right)(1-\alpha)(1-\delta).
\]
\end{theorem}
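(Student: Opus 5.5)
The proof will be a chain of three probabilistic events: the conformal event (prediction tubes contain the true trajectory), the ``verified'' event (tubes avoid $\mathcal S_u$), and the concentration event (the empirical fraction $|V|/N$ is close to its mean). Define $p := \Pr_{s_0\sim\rho}\big(S(s_0;\pi_\theta)\cap\mathcal S_u=\emptyset\big)$. Here the randomness is over the initial state only, with the calibrated tube radii $\eta_t$ held fixed, so $p$ is the probability mass of initial states the reachability check certifies. First, I will relate $p$ to $\Sigma^{\pi_\theta}$ by conditioning on $s_0$. If $S(s_0;\pi_\theta)\cap\mathcal S_u=\emptyset$ and the true trajectory lies in the tubes, i.e.\ $s_t\in S_t(s_0;\pi_\theta)$ for all $t\le K$, then every $s_t\in\mathcal S_s$; we also have $s_0\in\mathcal S_s$ since $\rho$ is supported on safe states. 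The trajectory-level conformal guarantee, read as the statement that for each fixed $s_0$ the true trajectory lies in the tubes with probability at least $1-\alpha$, then gives
\[
\Sigma^{\pi_\theta}\ \ge\ \mathbb E_{s_0}\!\left[\mathbf 1\{S(s_0;\pi_\theta)\cap\mathcal S_u=\emptyset\}\,\Pr(\text{coverage}\mid s_0)\right]\ \ge\ p\,(1-\alpha).
\]

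Second, I will lower bound $p$ from the samples. The indicators $X_i=\mathbf 1\{s_0^i\in V\}$ are i.i.d.\ Bernoulli$(p)$ conditional on the calibration data, because the $s_0^i$ are independent of that data. The two-sided Hoeffding inequality gives $|\,|V|/N-p\,|\le\sqrt{\tfrac1{2N}\log\tfrac2\delta}$ with probability at least $1-\delta$. The factor $2/\delta$ in the statement indicates the two-sided form is intended; the one-sided form would also suffice.

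Third, I will combine the two steps. On the Hoeffding event, $\Sigma^{\pi_\theta}\ge(|V|/N-\epsilon)(1-\alpha)$. To reach the stated bound with the extra factor $(1-\delta)$, I will read the statement as bounding $\Sigma^{\pi_\theta}$ averaged over the draw of the verification samples, or equivalently as a joint probability. Write $\mathbf 1_E$ for the Hoeffding event. The key inequality is then
\[
\Sigma\ \ge\ \Sigma\cdot\Pr(E)\ \ge\ (|V|/N-\epsilon)(1-\alpha)(1-\delta).
\]
When the right-hand side is negative the bound is trivial, which covers the case where the bracket is negative.

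The main obstacle is making the quantifiers honest. $|V|$ is random, while $\Sigma^{\pi_\theta}$ as defined should not depend on the verification samples. A literally ``deterministic'' reading of the bound therefore only holds on an event of probability $1-\delta$. I will handle this by stating the result as holding with confidence $1-\delta$ over the verification samples, which is the standard semi-probabilistic-verification interpretation borrowed from \cite{ma2025learning}. I will then note that multiplying by $(1-\delta)$ yields the weaker unconditional form in the theorem. A second subtlety is that the conformal coverage is marginal over calibration data rather than conditional on a fixed $s_0$. The hypothesis of the theorem sidesteps this by assuming the per-$s_0$ guarantee directly, and I will lean on that assumption rather than prove it.
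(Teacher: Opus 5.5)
Your Steps 1 and 2 are exactly the paper's sketch. Per-$s_0$ coverage gives $\Sigma^{\pi_\theta}\ge p(1-\alpha)$, under your reading of the hypothesis as coverage conditional on $s_0$ and on the realized radii (the paper's multiplicative combination tacitly needs the same reading). Hoeffding on the i.i.d.\ certification indicators gives $p\ge |V|/N-\epsilon$ on an event $E$ with $\Pr(E)\ge 1-\delta$, where $\epsilon=\sqrt{\tfrac{1}{2N}\log\tfrac{2}{\delta}}$. Your conclusion that $\Sigma^{\pi_\theta}\ge(|V|/N-\epsilon)(1-\alpha)$ holds on $E$ is correct, and it is the real content of the theorem.

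The gap is in Step 3. The chain $\Sigma^{\pi_\theta}\ge\Sigma^{\pi_\theta}\Pr(E)\ge(|V|/N-\epsilon)(1-\alpha)(1-\delta)$ uses $\Sigma^{\pi_\theta}\ge(|V|/N-\epsilon)(1-\alpha)$, which you only have on $E$. So the chain also holds only on $E$. Multiplying a confidence-$(1-\delta)$ bound by $(1-\delta)$ does not make it hold for every draw of $\{s_0^i\}$, and the ``unconditional form'' you announce is false.

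Concretely, let $\rho$ put mass $1/2$ on initial states whose tubes avoid $\mathcal S_u$ and whose true trajectories are safe. Put the other mass $1/2$ on states whose true trajectories always enter $\mathcal S_u$, inside tubes that contain them, so coverage holds everywhere. Then $\Sigma^{\pi_\theta}=1/2$. But with probability $2^{-N}>0$ all $s_0^i$ fall in the first region, so $|V|/N=1$. The right-hand side is then $(1-\epsilon)(1-\alpha)(1-\delta)\approx 0.83$ for the paper's $N=2000$, $\alpha=0.1$, $\delta=0.05$.

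Your averaged and joint readings do not rescue this. $\Sigma^{\pi_\theta}$ does not depend on the verification samples. Averaging the right-hand side gives $(p-\epsilon)(1-\alpha)(1-\delta)$, which already follows from Step 1 and is not the stated inequality. Likewise, $\Pr(\text{safe}\wedge E)=\Sigma^{\pi_\theta}\Pr(E)$ is again a fixed number, so it cannot dominate a right-hand side that moves with $|V|$ on every draw.

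So stop at the high-probability statement: with probability at least $1-\delta$ over $\{s_0^i\}$, $\Sigma^{\pi_\theta}\ge(|V|/N-\epsilon)(1-\alpha)$. This implies the displayed bound on that event. The extra factor $(1-\delta)$ in the theorem is the paper's informal device of folding sampling confidence into $\Sigma^{\pi_\theta}$ (its item (c)). The paper's sketch simply asserts that ``combining'' yields it, and no argument can derive it in the literal sample-by-sample sense.
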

\begin{proof}[Proof Sketch]
For any fixed $s_0$, the conformal coverage guarantee implies that
\(
\Pr\!\big(S(s_0;\pi_\theta)\cap \mathcal S_u=\emptyset\big)\;\ge\; 1-\alpha.
\)
Now let $x = \Pr_{s\sim\rho}[S(s;\pi_\theta)\cap \mathcal S_u=\emptyset]$ 
and $\hat x = |V|/N$. 
Applying Hoeffding’s inequality, we obtain
\[
\Pr\!\left(x \ge \hat x - \sqrt{\tfrac{1}{2N}\log\tfrac{2}{\delta}}\right) \;\ge\; 1-\delta,
\]
where the probability is over the i.i.d.~draws of $N$ initial states $s_0$.
Combining this with the conformal trajectory guarantee ($1-\alpha$) yields the result.
\end{proof}


We refer to this general approach as \emph{conformal safety analysis (CSA)}.
A key practical benefit of CSA is that the conformal procedure produces per–time-step reachable sets 
\(\,S_t(s_0;\pi_\theta)\)
so the safety condition can be checked separately for each \(t\).
Consequently, our verification problem reduces to checking
\[
S_t(s_0;\pi_\theta)\cap\mathcal S_u=\emptyset\quad\text{for each }t,
\]
Modern neural network verification tools, such as \((\alpha,\beta)\)-CROWN \cite{xu2021fast, shi2025neural}, can handle nonlinear models (e.g., ReLU or tanh networks) and thus general nonlinear safety specifications.
Moreover, many practically relevant cases involve relatively simple constraints: 
for example, safety specification often decomposes across coordinates (e.g., axis-aligned box constraints), in which case it suffices to check safety independently for each dimension.

\subsection{Training with Differentiable Conformal and Safety Loss}

CSA provides us with a safety verification approach.
Our key concern, however, is to \emph{train} policies and dynamic models to enable us to achieve safety.
This entails two challenges: first, how do we design a loss function that promotes smaller conformal uncertainty sets and higher confidence (which, in combination, allow us to achieve tighter reachability bounds), and second, how do we design a loss function that specifically promotes safety?
We tackle these in turn next.

\paragraph{Conformal Loss}

While traditional conformal prediction provides statistical guarantees,
it is not differentiable and thus incompatible with end-to-end gradient-based training.
To address this issue, we make use of an approach proposed by~\cite{bai2022efficient} that enables us to obtain a differentiable approximation of conformal sets using a neural network, which takes state and action as an input, and predicts the corresponding conformity threshold $\eta$.
We denote the neural network by $\eta_\omega(s,a)$, with $\eta_{\omega,j}(s,a)$ the threshold predicted for state variable $j$.
This neural network is trained to balance two objectives:
1) minimizing the prediction set size (``efficiency''), and 2) ensuring that the empirical coverage level is above a target threshold.

To formalize, we define the efficiency loss (which promotes smaller conformal sets) as
\[
\mathcal{L}_{\text{eff}}(\omega) = \frac{1}{N}\sum_{(s,a) \in \mathcal{D}_e} \left(\prod_{j=1}^n \eta_{\omega,j}(s,a)\right)
\]
where $n$ is the state dimension and $\mc{D}_e$ is a dataset of tuples $(s,a,s')$ obtained from trajectories $\tilde{\tau}$ sampled in epoch $e$ of training using the latest policy $\pi_\theta$, where we fix the datasets size $N = |\mc{D}_e|$.

Next, the coverage loss penalizes violations of the target coverage, and is defined as
\[
\mathcal{L}_{\text{cov}}(\beta,\omega) = \max(0, 1-\alpha-C_{\textit{emp}}),
\]
where $C_{\textit{emp}}$ is the empirical coverage defined as
\[
C_{\textit{emp}}(\beta,\omega) = \frac{1}{N}\sum_{(s, a, s') \in \mathcal{D}_e} \mathbf{1}[||s' - \hat{f}_{\beta}(s, a)|| \le \eta_{\omega}(s,a)].
\]
However, note that empirical coverage is not differentiable with respect to the parameters $(\beta,\omega)$.
To address this issue, we propose the following differentiable proxy:
\[
\hat{C}_{emp}(\beta,\omega) = \frac{1}{N} \sum_{(s,a,s') \in \mathcal{D}_e} \sigma\left(1-\max_j \frac{|\hat{f}_{\beta}(s, a) - s'_j|}{\max\{\eta_{\omega,j}(s,a),c\}}\right)
\]
where $c$ is a small constant ensuring that the loss is well-behaved and $\sigma(\cdot)$ is a sigmoid that smoothly approximates the indicator.
To unpack, observe that for the max norm,
$\max_j \frac{|\hat{f}_{\beta}(s, a) - s'_j|}{\eta_{\omega,j}(s,a)} \le 1$ 
iff 
$||s' - \hat{f}_{\beta}(s, a)|| \le \eta_{\omega}(s,a)$, i.e., it is equivalent to $s'$ lying inside the conformal set.

Finally, we combine these into conformal loss as
\[
\mathcal{L}_{\text{conf}}(\beta,\omega) = \max_{\lambda \ge 0}\;\big(\mathcal{L}_{\text{eff}}(\omega) + \lambda \mathcal{L}_{\text{cov}}(\beta,\omega)\big),
\]
where $\lambda$ is a Lagrange multiplier periodically updated during training using projected gradient ascent.



\paragraph{Safety Loss}

Next, we consider the safety aspect of the problem explicitly.
In particular, we construct a safety loss $\mc{L}_{\text{safety}}(\theta,\omega)$ that takes the learned dynamics as given, and consists of two components: one which maintains safety over the entire sequence of time steps, and the second which explicitly encourages a policy to improve the safety score between initial and $K$th step, with $K$ increasing over training iterations somewhat echoing curriculum learning methods for learning safe control~\cite{wu2024verified}.
Note that the safety loss is explicitly also a function of the conformal interval parameters $\eta_\omega$, since these have a direct impact on verified safety.
Additionally, safety loss is computed on a pre-collected initial state dataset $\mc{D}_0 = \{s_0^i\}_{i=1}^{N_0}$, $s_0^i \sim \rho$.
By evaluating safety consistently on this fixed set of initial states rather than randomly sampled states each epoch, we ensure the safety loss signal remains stable and meaningful throughout training, avoiding noise from sampling variance that could obscure true safety improvements.

Define a \emph{safety score function}  with respect to a set of states $S$, $g(S) = \max_{s \in S} h(s)$, quantifying the maximal safety violation of any $s \in S$.
The first safety loss component captures the maximal safety loss over a target safety horizon $K_e$ in training epoch $e$:
\[
\mathcal{L}_{\text{safety}}^{\text{max}}(\theta,\omega) = \max_{\substack{t \in [1..K_e];\\ s_0 \in \mathcal{D}_0}} g(\hat{f}_\beta^{(t)}(s_0;\pi_\theta) \pm \eta_\omega(s_t,a_t)).
\]

The second part of the safety loss aims to incentivize learning to \emph{improve} (or reduce the degradation) of safety over the longer-term evolution of the dynamical system (without consideration of transient terms), which we formalize as
\begin{align*}
    \mathcal{L}_{\text{safety}}^{\text{improve}}(\theta,\omega) = \frac{1}{N_0 K_e}\sum_{s_0 \in \mathcal{D}_0} [g(\hat{f}^{(1)}_\beta(s_0; \pi_\theta) \pm \eta_{\omega}(s_0,a_0)) \\ - g(\hat{f}^{(K_e)}_\beta(s_0; \pi_\theta) \pm \eta_{\omega}(s_{K_e},a_{K_e}))]
\end{align*}

The full safety loss is then a weighted combination of the two terms:
\[
\mc{L}_{\text{safety}}(\theta,\omega) = w_{\text{max}}\mc{L}_{\text{safety}}^{\text{max}}(\theta,\omega) + w_{\text{improve}} \mc{L}_{\text{safety}}^{\text{improve}}(\theta,\omega),
\]
with the weights $w_{\text{max}}$ and $w_{\text{improve}}$ specified as hyperparameters.


\subsection{Putting Everything Together: The ReCORS Algorithm}

The complete ReCORS algorithm proceeds as follows.
First, it pre-trains the dynamics model using a dataset generated by selecting random actions, collects the initial state dataset for safety loss computation, and initializes the policy (actor), critic (as in conventional RL), and conformal uncertainty neural networks.
In each training epoch, the algorithm then 
(i) collects a new dataset of trajectories with the current policy, 
(ii) fine-tunes the dynamics model using this dataset with $\mathcal{L}_{\text{dyn}}$, 
and (iii) updates the remaining components. 
As in standard actor–critic methods, the critic parameters $\theta_c$ are trained separately using the conventional RL loss $\mathcal{L}_{\text{RL}}(\theta_c,\theta)$. 
The policy parameters $\theta$ and conformal network parameters $\omega$ are then optimized jointly using a combined loss
\[
\mc{L}(\theta_c,\theta,\beta,\omega) = w_1 \mc{L}_{\text{RL}}(\theta_c,\theta) + w_2 \mc{L}_{\text{conf}}(\beta,\omega)  + w_3 \mc{L}_{\text{safety}}(\theta,\omega),
\]
where $w_1$, $w_2$, $w_3$ are hyperparameters, $\theta_c$ are the parameters of the critic, $\theta$ parameters of the actor (policy), $\beta$ parameters of the dynamics model, and $\omega$ parameters of the conformal uncertainty estimation network.

An additional feature of the ReCORS algorithm is a form of \emph{curriculum learning}: we initially begin with a smaller safety horizon $K_0$, and incrementally increase the horizon whenever provable safety is achieved or after every $E$ epochs.
Algorithm~\ref{algo:main} formalizes this discussion.

\begin{algorithm}
\caption{ReCORS}\label{algo:main}
\hspace*{\algorithmicindent} \textbf{Input} 
pretrained dynamics model $\beta$, pre-collected initial state dataset $\mc{D}_0$, initial safety horizon $K_0$, epoch counter $E$, total number of epochs $T$
\begin{algorithmic}[1]
\State Initialize $\theta_c, \theta, \omega$
\For{epoch $e = 0, 1, 2, \ldots, T$}
    \State Collect trajectories and induced dataset $\mathcal{D}_e$ using the current policy $\pi_{\theta_e}$, or random actions if $e=0$
    
    \State Dynamics update: $\beta_{e+1} \leftarrow \arg\min_{\beta} \mathcal{L}_{\text{dyn}}(\beta_e)$.

    \State Critic update: $\theta_{c,e+1} \leftarrow \arg\min_{\theta_c} \mathcal{L}_{\text{RL}}(\theta_{c,e})$

    \State Actor and uncertainty update:
    \begin{align*}
        (\theta_{e+1}, \omega_{e+1}) \leftarrow \arg\min_{\theta,\omega} 
        \big( &w_1 \mathcal{L}_{\text{RL}}(\theta_e) 
        \\+ &w_2 \mathcal{L}_{\text{conf}}(\beta_{e+1},\omega_e) 
        \\+ &w_3 \mathcal{L}_{\text{safety}}(\theta_e,\omega_e) \big)
    \end{align*}
    
    \State Evaluate coverage: 
    \[
    C_{\textit{emp}} = \tfrac{1}{N}\sum_{(s,a,s') \in \mathcal{D}_e} 
    \mathbf{1}\big[s' \in \hat{f}_{\beta_{e+1}}(s,a) \pm \eta_{\omega_{e+1}}(s,a)\big]
    \]
    
    \If{($\mc{L}_{\text{safety}}^{\text{max}} < 0$ and $C_{\textit{emp}} \geq 1-\alpha$) or every $E$ epochs}
        \State $K_{e+1} = K_e + 1$
    \Else
        \State $K_{e+1} = K_e$
    \EndIf
\EndFor
\end{algorithmic}
\end{algorithm}

\section{Experiments}\label{sec:experiments}
\subsection{Experiment Setup}

We evaluate ReCORS in six environments with linear safety constraints and one environment with nonlinear constraints.
First, we designed a 2D Quadrotor environment with a nonlinear safe distance constraint for demonstrating ReCORS in the most general setting.
And for the linear cases, 
the first four, Cartpole, Lane Following, and 2D and 3D Quadrotor control, have hard state space safety constraints with binary cost signals, where the episode terminates with any violation. 
The other two environments, CarGoal and HalfCheetah, are from Safety-Gymaniusm \cite{ji2023safety}, which have more complex underlying system dynamics, richer cost structures, and are higher dimensional (e.g., $72$ dimensions for CarGoal).
Detailed experiment descriptions are provided in the Appendix.

We compare to five SOTA baselines: 
1) Lagrangian-based Proximal Policy Optimization (PPOLag)~\cite{ray2019benchmarking}; 
2) Projection-based Constrained Policy Optimization (PCPO)~\cite{yang2020projection}; 
3) Constraint-Rectified Policy Optimization (CRPO)~\cite{xu2021crpo}; 
4) Penalized Proximal Policy Optimization (P3O)~\cite{zhang2022penalized}; and 
5) Reachability Estimation for Safe Policy Optimization
(RESPO)~\cite{ganai2023iterative}. 
For consistency and fair comparison, all but RESPO were implemented based on~\cite{ji2024omnisafe} and all used similar hyperparameters.
Details of specific settings and hyperparameters are provided in the Appendix.

We use both the time series conformal prediction method proposed by Cleaveland et al.~\cite{cleaveland2024conformal} and the union bound applied to the step-wise split conformal prediction.
For both methods, we set the trajectory level miscoverage level $\alpha$ to $0.1$.
Our calibration set contains $1000$ trajectories,
where, for the times series method, we use $100$ for optimization of the $w$ parameters and the rest to compute the actual prediction regions.
We then perform forward reachability analysis on $2000$ new trajectories (verification set) using the calibrated thresholds.
After obtaining the empirical proportions of trajectories that are safe at each time step, we apply Theorem~\ref{thm:hoeffding_bound} (with $\delta=0.05$) to compute the lower bound safety probabilities with respect to the entire initial state space.

\begin{figure*}[h!]
    \centering
    \includegraphics[width=0.65\linewidth]{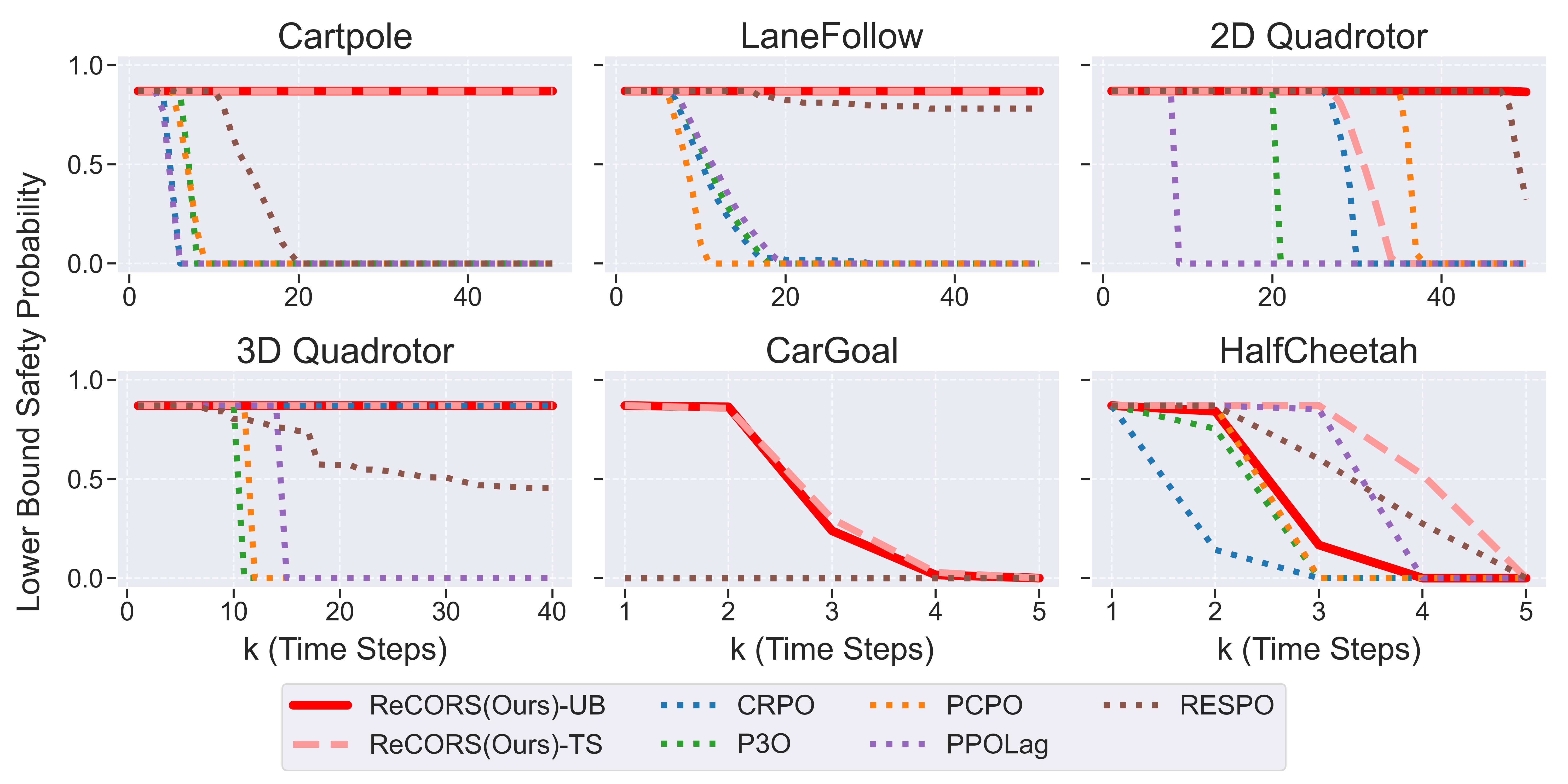}
    \caption{Probabilistic safety verification results. Higher is better.
    ReCORS-UB (red solid line) is applying conformal prediction per-step and union bounding the results.
    ReCORS-TS (pink dotted line) uses the time series conformal prediction method from~\cite{cleaveland2024conformal}.
    All the baseline plotted lines use the conformal prediction method that results in better verified safety results.
    }
    \Description{Probabilistic safety verification results. Higher is better.
    ReCORS-UB (red solid line) is applying conformal prediction per-step and union bounding the results.
    ReCORS-TS (pink dotted line) uses the time series conformal prediction method from~\cite{cleaveland2024conformal}.
    All the baseline plotted lines use the conformal prediction method that results in better verified safety results.}
    \label{fig:veri_results}
\end{figure*}

\begin{figure*}[h!]
    \centering
    \includegraphics[width=0.65\linewidth]{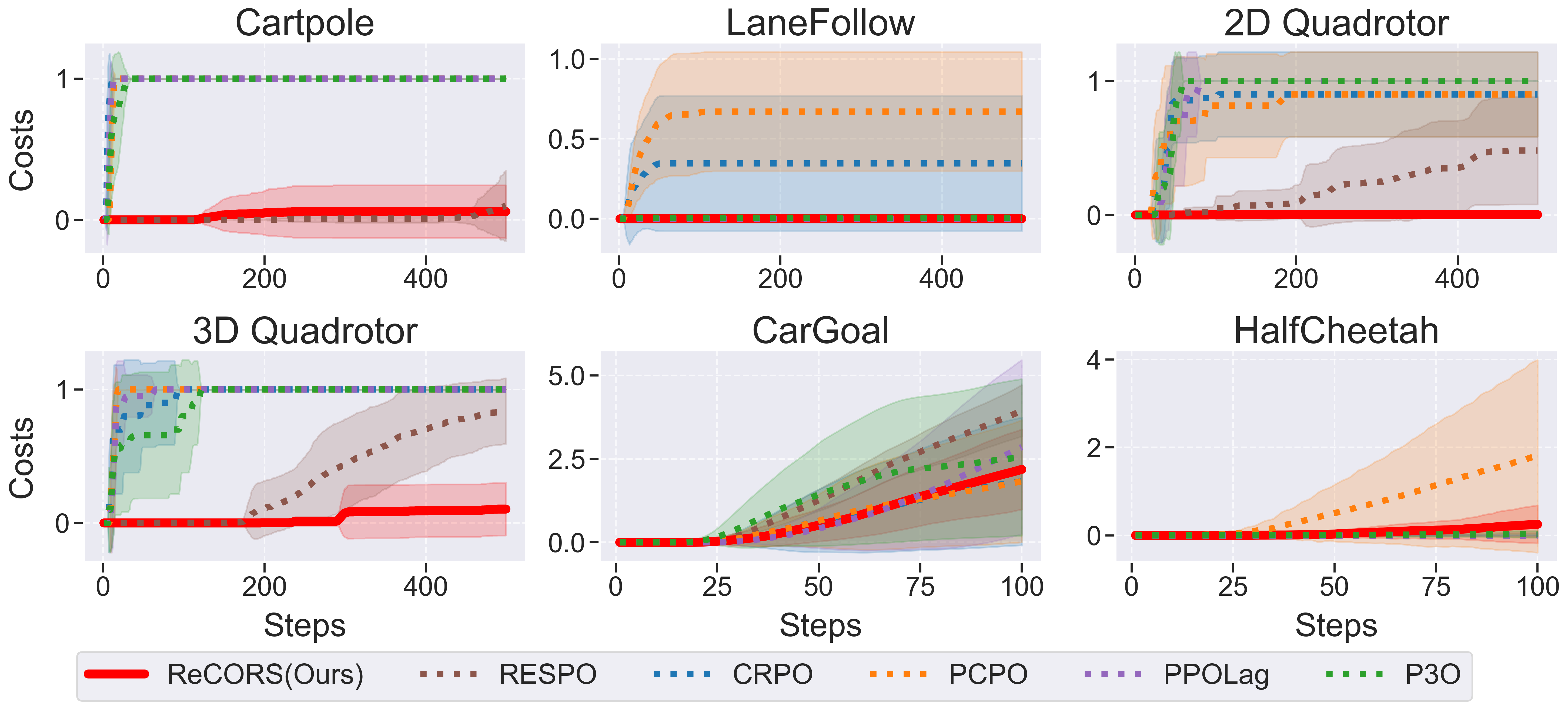}
    \caption{Empirical safety evaluation, comparing safety cost ReCORS (red line) with baselines. Lower is better.}
    \Description{Empirical safety evaluation, comparing safety cost ReCORS (red line) with baselines. Lower is better.}
    \label{fig:empirical_cost}
\end{figure*}

\begin{figure}[h!]
    \centering
    \includegraphics[width=0.95\linewidth]{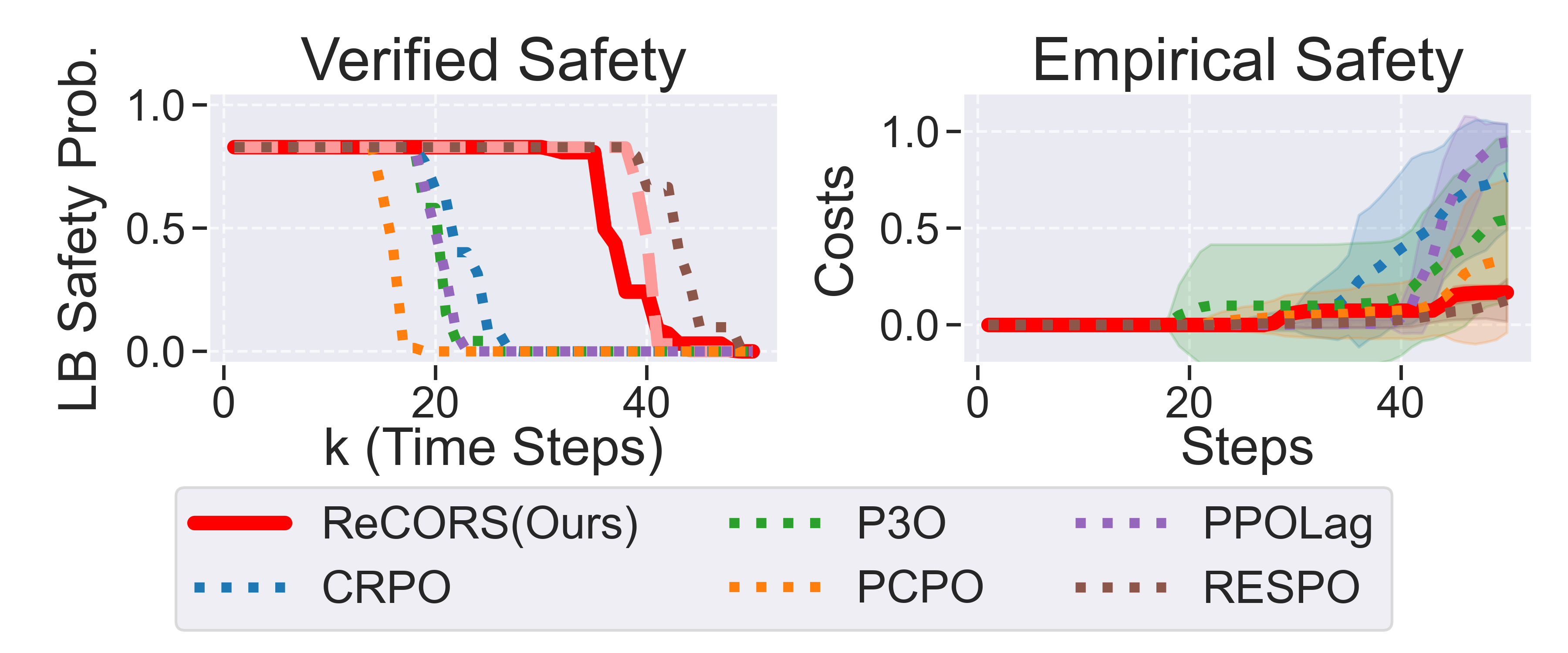}
    \caption{Verified safety (left) and empirical safety (right) for 2D Quadrotor with nonlinear safe distance constraint.}
    \Description{Verified safety (left) and empirical safety (right) for 2D Quadrotor environment with nonlinear safe distance constraint.}
    \label{fig:nonlinear_safety}
\end{figure}

\begin{figure*}[h!]
    \centering
    \includegraphics[width=0.75\linewidth]{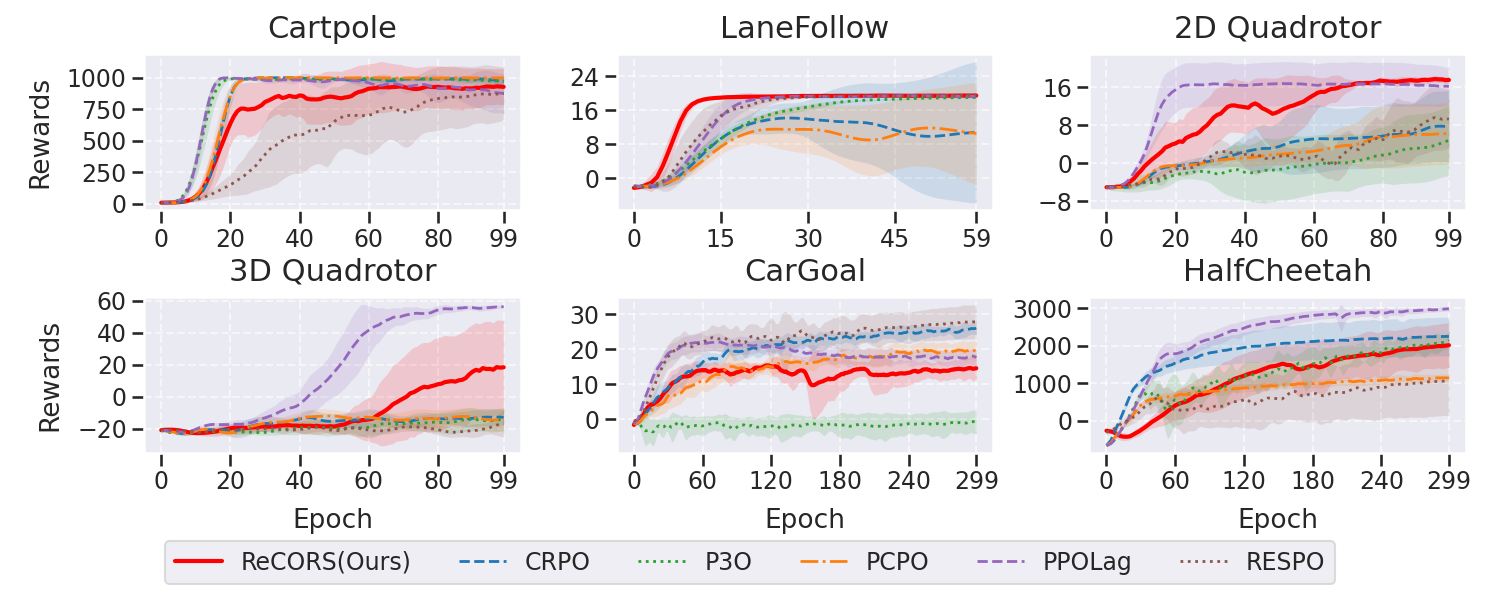}
    \caption{Reward comparisons of ReCORS (red line) with baselines. Higher is better.}
    \Description{Reward comparisons of ReCORS (red line) with baselines. Higher is better.}
    \label{fig:empirical_rewards}
\end{figure*}

\begin{figure}[h!]
    \centering
    \includegraphics[width=0.55\linewidth]{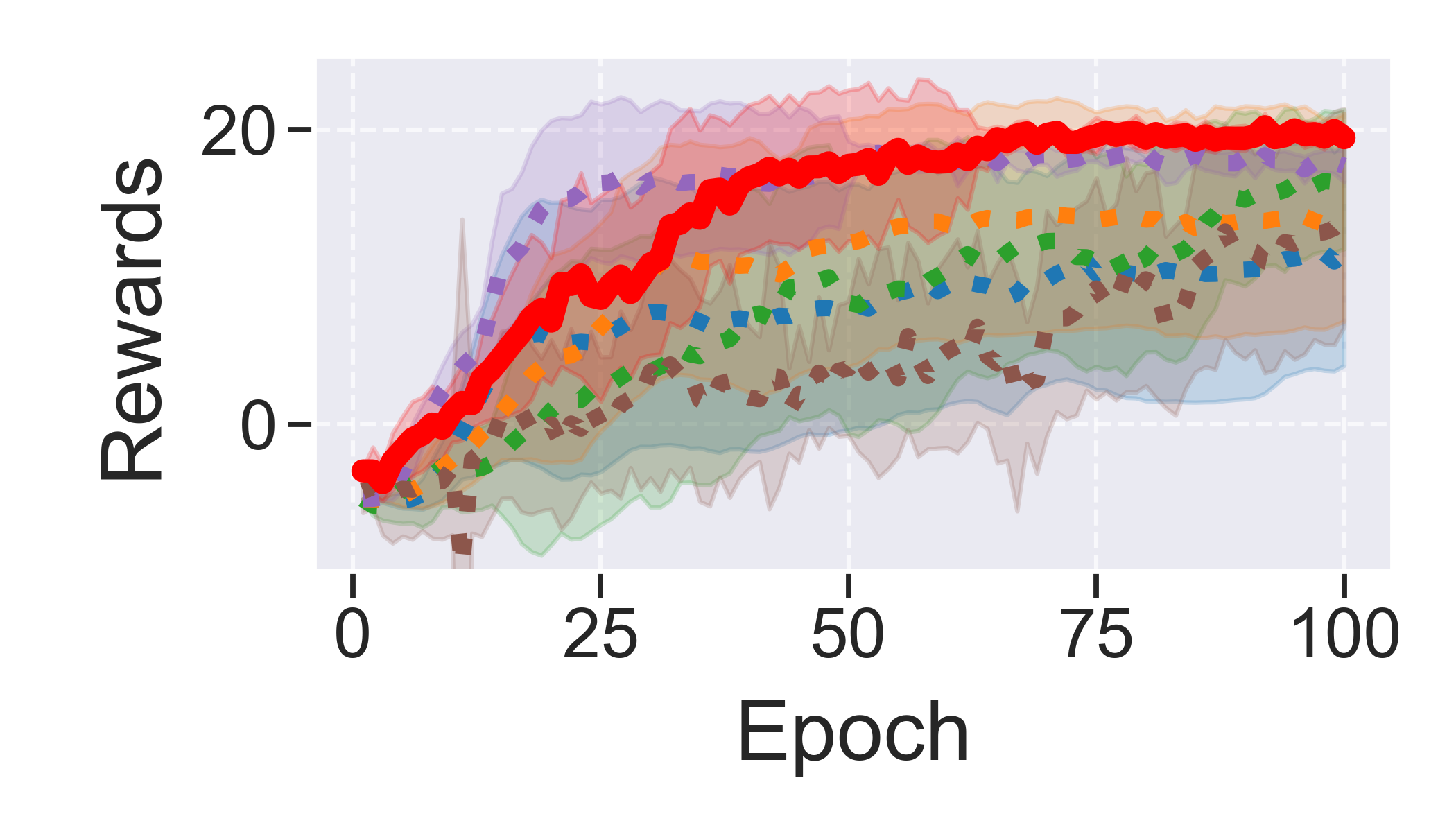}
    \caption{Reward comparisons in 2D Quadrotor environment with nonlinear safe distance constraint.}
    \Description{Reward comparisons in 2D Quadrotor environment with nonlinear safe distance constraint.}
    \label{fig:nonlinear_reward}
\end{figure}

\subsection{Results}

\subsubsection{Verified Safety Evaluation}


We begin by comparing ReCORS to the baselines in terms of verified lower bound on the safety probability ($y$-axis) as a function of the decision horizon $K$ ($x$-axis).
The results with affine safety constraints are provided in Figure \ref{fig:veri_results}, and exhibit notable improvement over the baselines in every environment.
For example, in CartPole, none of the baselines provide useful safety guarantees beyond $K=20$, while ReCORS guarantees safety with high probability through $K\ge 50$.
In 2D and 3D Quadrotor, RESPO and CRPO can be verified safe for relatively long horizons just as ReCORS, but this is a superficial consequence of the drone learning to essentially stay in place for both these baselines (obtaining essentially minimal reward in both cases; see Figure~\ref{fig:empirical_rewards}), while ReCORS achieves significantly higher reward than these \emph{without compromising safety guarantees}.
Curiously, while in most cases the tighter bound either coincides with or improves on the union bound, the latter yields the best performance in the 2D quadrotor case; indeed, in most cases the two bounds essentially coincide.
The reason stems from the fact that our trained systems often have bounds that are relatively similar over time, so that the optimal time-series bound parameters are often essentially uniform, corresponding to the union bound.

Figure~\ref{fig:nonlinear_safety} (left) compares verified safety of ReCORS against baselines in the 2D Quadrotor environment with a nonlinear safe-distance constraint.
ReCORS significantly outperforms all baselines except RESPO, but this is again a superficial consequence of RESPO learning resulting in the drone staying in place.
Consequently, ReCORS reward is significantly higher than RESPO (see Figure~\ref{fig:nonlinear_reward}).



\subsubsection{Empirical Safety Evaluation}

Next, we consider to what extent policies are \emph{empirically safe} over far longer horizons than we can verify.
We quantify this in terms of \emph{costs of safety violations}, which is just the fraction of time steps for which safety constraints are violated.
Figure~\ref{fig:empirical_cost} presents the results for affine safety, while nonlinear safety results are in Figure~\ref{fig:nonlinear_safety} (right).
The results are roughly consistent with verified safety, showing that ReCORS results in exceedingly few safety violations over long decision horizons, and is in all cases either far better than the baselines (e.g., 3D quadrotor), or comparable to the best baseline.
In the latter instances, the baselines which are competitive fair far worse in terms of reward.


\subsubsection{Empirical Performance Evaluation}

Finally, we compare performance of ReCORS and the baselines in Figure \ref{fig:empirical_rewards}, which shows the mean and standard deviation of empirical rewards across $10$ random seeds as a function of training epochs.
Since we expect a natural tradeoff between safety---particularly, verified safety---and reward, it is no surprise that the relative performance of ReCORS varies, and in several domains other methods, such as PPOLag and CRPO, exhibit better performance (reward).
However, in all cases in which other methods achieve better reward than ReCORS, they exhibit very poor safety properties; indeed,
ReCORS \textit{is the only method that can achieve both strong performance and verified safety.}
To illustrate, consider the HalfCheetah environment, where CRPO and PPOLag achieve higher reward.
Here, CRPO achieves the worst safety, and while PPOLag is better, ReCORS achieves significantly better verified safety.
In several domains, such as 2D Quadrotor, ReCORS achieves \emph{both}, superior safety and reward than all baselines.

\begin{figure}[h!]
    \centering
    \includegraphics[width=0.95\linewidth]{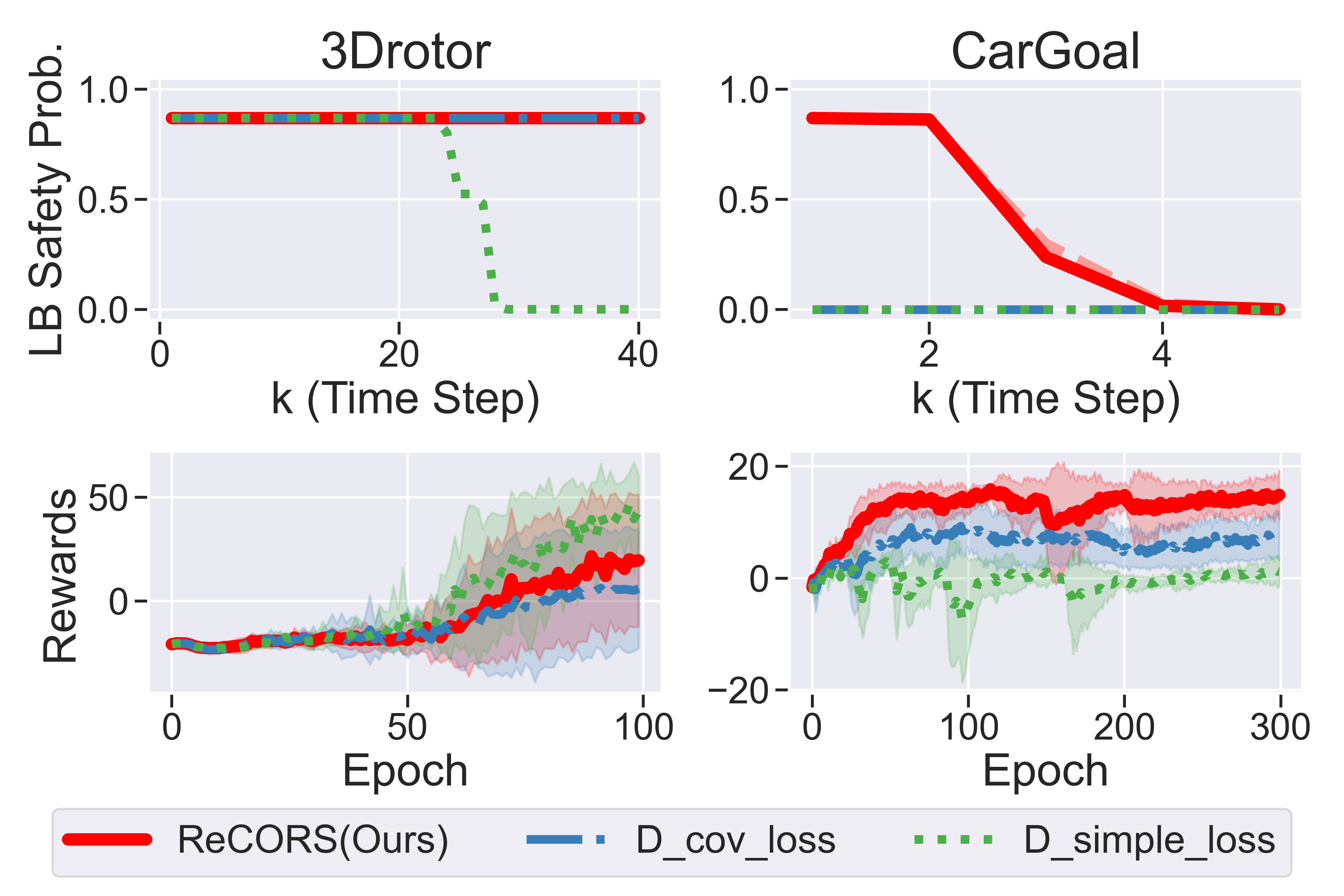}
    \caption{Ablation results for 3D Quadrotor (left) and CarGoal (right). D\_cov\_loss jointly optimizes $\beta$ and $\omega$ in the conformal coverage loss, while ReCORS only uses the latter ($\omega$). D\_simple\_loss uses uniform weights in learning $\hat{f}_\beta$.
    }
    \Description{Ablation results for 3D Quadrotor (left) and CarGoal (right). D\_cov\_loss jointly optimizes $\beta$ and $\omega$ in the conformal coverage loss, while ReCORS only uses the latter ($\omega$). D\_simple\_loss uses uniform weights in learning $\hat{f}_\beta$.}
    \label{fig:ablation_plot}
\end{figure}

\subsection{Ablations}
\label{S:exp_ablations}

Finally, we provide ablation results on two design choices: 1) including a heuristic safety weight in training the dynamics model, and 2) jointly optimizing the dynamics and uncertainty model in the conformal coverage loss $\mathcal{L}_{\text{cov}}(\beta,\omega)$.
Results are shown in Figure~\ref{fig:ablation_plot}.
We find that we have the best performance omitting dependence on $\beta$ in the coverage loss (ReCORS is better than D\_conv\_loss).
Additionally, using safety-aware weights for learning $\hat{f}_\beta$ provides better verifies safety while maintaining good performances (ReCORS is safer than D\_simple\_loss).

\section{Conclusion}\label{sec:conclusion}

We present a novel approach for learning effective policies in general dynamical systems while abiding by probabilistic safety constraints.
Our approach is designed to deal with unknown stochastic dynamics, and avoids restrictive assumptions on either true dynamics or how they are learned.
This is accomplished by combining conformal predictions, finite-horizon reachability proofs, and finite-sample tail bounds to obtain verified safety, with model-free RL training methods facilitating strong performance.
While we achieve state-of-the-art results on several standard benchmarks, many open problems remain, paramount among these being an assumption of perfect observability of the system state.

\section*{Acknowledgements}
This work was partially supported by the NSF (IIS-2214141, CCF-2403758), ARO (W911NF-25-1-0059), ONR (N000142412663), Foresight Institute, and Amazon.



\balance
\bibliographystyle{ACM-Reference-Format} 
\bibliography{ref}

@article{bai2022efficient,
  title={Efficient and differentiable conformal prediction with general function classes},
  author={Bai, Yu and Mei, Song and Wang, Huan and Zhou, Yingbo and Xiong, Caiming},
  journal={arXiv preprint arXiv:2202.11091},
  year={2022}
}

@inproceedings{stooke2020responsive,
  title = {Responsive Safety in Reinforcement Learning by Pid Lagrangian Methods},
  booktitle = {International Conference on Machine Learning},
  author = {Stooke, Adam and Achiam, Joshua and Abbeel, Pieter},
  year = {2020},
  pages = {9133--9143},
  publisher = {PMLR}
}

@inproceedings{ma2022conservative,
  title = {Conservative and Adaptive Penalty for Model-Based Safe Reinforcement Learning},
  booktitle = {Proceedings of the {{AAAI}} Conference on Artificial Intelligence},
  author = {Ma, Yecheng Jason and Shen, Andrew and Bastani, Osbert and Dinesh, Jayaraman},
  year = {2022},
  volume = {36},
  pages = {5404--5412}
}

@inproceedings{so2023solving,
  title = {Solving Stabilize-Avoid Optimal Control via Epigraph Form and Deep Reinforcement Learning},
  booktitle = {Proceedings of Robotics: {{Science}} and Systems},
  author = {So, Oswin and Fan, Chuchu},
  year = {2023}
}

@article{jayant2022model,
  title = {Model-Based Safe Deep Reinforcement Learning via a Constrained Proximal Policy Optimization Algorithm},
  author = {Jayant, Ashish K and Bhatnagar, Shalabh},
  year = {2022},
  journal = {Advances in Neural Information Processing Systems},
  volume = {35},
  pages = {24432--24445}
}

@article{zhou2022neural,
  title={Neural Lyapunov control of unknown nonlinear systems with stability guarantees},
  author={Zhou, Ruikun and Quartz, Thanin and De Sterck, Hans and Liu, Jun},
  journal={Advances in Neural Information Processing Systems},
  volume={35},
  pages={29113--29125},
  year={2022}
}

@article{wang2024providing,
  title={Providing safety assurances for systems with unknown dynamics},
  author={Wang, Hao and Borquez, Javier and Bansal, Somil},
  journal={IEEE Control Systems Letters},
  year={2024},
}

@book{vovk2005algorithmic,
  title={Algorithmic learning in a random world},
  author={Vovk, Vladimir and Gammerman, Alexander and Shafer, Glenn},
  volume={29},
  year={2005},
  publisher={Springer}
}

@article{angelopoulos2021gentle,
  title={A gentle introduction to conformal prediction and distribution-free uncertainty quantification},
  author={Angelopoulos, Anastasios N and Bates, Stephen},
  journal={arXiv preprint arXiv:2107.07511},
  year={2021}
}

@article{zhang2021reinforcement,
  title={Reinforcement learning for robot research: A comprehensive review and open issues},
  author={Zhang, Tengteng and Mo, Hongwei},
  journal={International Journal of Advanced Robotic Systems},
  volume={18},
  number={3},
  pages={1-21},
  year={2021},
}

@article{gu2022review,
  title={A review of safe reinforcement learning: Methods, theory and applications},
  author={Gu, Shangding and Yang, Long and Du, Yali and Chen, Guang and Walter, Florian and Wang, Jun and Knoll, Alois},
  journal={arXiv preprint arXiv:2205.10330},
  year={2022}
}

@inproceedings{wu2024verified,
  title={Verified Safe Reinforcement Learning for Neural Network Dynamic Models},
  author={Wu, Junlin and Zhang, Huan and Vorobeychik, Yevgeniy},
  booktitle={Neural Information Processing Systems},
  year={2024}
}

@inproceedings{salamati2022data,
  title={Data-driven safety verification of stochastic systems via barrier certificates: A wait-and-judge approach},
  author={Salamati, Ali and Zamani, Majid},
  booktitle={Learning for Dynamics and Control Conference},
  pages={441--452},
  year={2022},
  organization={PMLR}
}

@phdthesis{fan2019formal,
  title={Formal methods for safe autonomy: Data-driven verification, synthesis, and applications},
  author={Fan, Chuchu},
  year={2019},
  school={University of Illinois at Urbana-Champaign}
}

@incollection{fan2019data,
  title={Data-driven safety verification of complex cyber-physical systems},
  author={Fan, Chuchu and Mitra, Sayan},
  booktitle={Design Automation of Cyber-Physical Systems},
  pages={107--142},
  year={2019},
  publisher={Springer}
}

@inproceedings{fan2017dryvr,
  title={DryVR: Data-driven verification and compositional reasoning for automotive systems},
  author={Fan, Chuchu and Qi, Bolun and Mitra, Sayan and Viswanathan, Mahesh},
  booktitle={International Conference on Computer Aided Verification},
  pages={441--461},
  year={2017},
  organization={Springer}
}

@inproceedings{yu2022reachability,
  title={Reachability constrained reinforcement learning},
  author={Yu, Dongjie and Ma, Haitong and Li, Shengbo and Chen, Jianyu},
  booktitle={International Conference on Machine Learning},
  pages={25636--25655},
  year={2022},
}

@article{ganai2023iterative,
  title={Iterative reachability estimation for safe reinforcement learning},
  author={Ganai, Milan and Gong, Zheng and Yu, Chenning and Herbert, Sylvia and Gao, Sicun},
  journal={Neural Information Processing Systems},
  volume={36},
  pages={69764--69797},
  year={2023}
}

@article{ma2025learning,
  title={Learning Vision-Based Neural Network Controllers with Semi-Probabilistic Safety Guarantees},
  author={Ma, Xinhang and Wu, Junlin and Sibai, Hussein and Kantaros, Yiannis and Vorobeychik, Yevgeniy},
  journal={arXiv preprint arXiv:2503.00191},
  year={2025}
}

@article{yang2020projection,
  title={Projection-based constrained policy optimization},
  author={Yang, Tsung-Yen and Rosca, Justinian and Narasimhan, Karthik and Ramadge, Peter J},
  journal={arXiv preprint arXiv:2010.03152},
  year={2020}
}

@inproceedings{xu2021crpo,
  title={{CRPO}: A new approach for safe reinforcement learning with convergence guarantee},
  author={Xu, Tengyu and Liang, Yingbin and Lan, Guanghui},
  booktitle={International Conference on Machine Learning},
  pages={11480--11491},
  year={2021},
}

@article{zhang2022penalized,
  title={Penalized proximal policy optimization for safe reinforcement learning},
  author={Zhang, Linrui and Shen, Li and Yang, Long and Chen, Shixiang and Yuan, Bo and Wang, Xueqian and Tao, Dacheng},
  journal={arXiv preprint arXiv:2205.11814},
  year={2022}
}

@article{ray2019benchmarking,
  title={Benchmarking safe exploration in deep reinforcement learning},
  author={Ray, Alex and Achiam, Joshua and Amodei, Dario},
  journal={arXiv preprint arXiv:1910.01708},
  volume={7},
  number={1},
  pages={2},
  year={2019}
}

@article{ji2024omnisafe,
  title={Omnisafe: An infrastructure for accelerating safe reinforcement learning research},
  author={Ji, Jiaming and Zhou, Jiayi and Zhang, Borong and Dai, Juntao and Pan, Xuehai and Sun, Ruiyang and Huang, Weidong and Geng, Yiran and Liu, Mickel and Yang, Yaodong},
  journal={Journal of Machine Learning Research},
  volume={25},
  number={285},
  pages={1--6},
  year={2024}
}

@article{xu2021fast,
  title={Fast and complete: Enabling complete neural network verification with rapid and massively parallel incomplete verifiers},
  author={Xu, Kaidi and Zhang, Huan and Wang, Shiqi and Wang, Yihan and Jana, Suman and Lin, Xue and Hsieh, Cho-Jui},
  journal={arXiv preprint arXiv:2011.13824},
  year={2020}
}

@inproceedings{ji2023safety,
  title={Safety Gymnasium: A Unified Safe Reinforcement Learning Benchmark},
  author={Jiaming Ji and Borong Zhang and Jiayi Zhou and Xuehai Pan and Weidong Huang and Ruiyang Sun and Yiran Geng and Yifan Zhong and Josef Dai and Yaodong Yang},
  booktitle={Thirty-seventh Conference on Neural Information Processing Systems Datasets and Benchmarks Track},
  year={2023},
  url={https://openreview.net/forum?id=WZmlxIuIGR}
}

@book{altman2021constrained,
  title={Constrained Markov Decision Processes},
  author={Altman, Eitan},
  year={2021},
  publisher={Routledge}
}

@article{tessler2018reward,
  title={Reward constrained policy optimization},
  author={Tessler, Chen and Mankowitz, Daniel J and Mannor, Shie},
  journal={arXiv preprint arXiv:1805.11074},
  year={2018}
}

@inproceedings{achiam2017constrained,
  title={Constrained policy optimization},
  author={Achiam, Joshua and Held, David and Tamar, Aviv and Abbeel, Pieter},
  booktitle={International conference on machine learning},
  pages={22--31},
  year={2017},
}

@article{ma2021feasible,
  title={Feasible actor-critic: Constrained reinforcement learning for ensuring statewise safety},
  author={Ma, Haitong and Guan, Yang and Li, Shegnbo Eben and Zhang, Xiangteng and Zheng, Sifa and Chen, Jianyu},
  journal={arXiv preprint arXiv:2105.10682},
  year={2021}
}

@inproceedings{dawson2022safe,
  title={Safe nonlinear control using robust neural lyapunov-barrier functions},
  author={Dawson, Charles and Qin, Zengyi and Gao, Sicun and Fan, Chuchu},
  booktitle={Conference on Robot Learning},
  pages={1724--1735},
  year={2022},
  organization={PMLR}
}

@inproceedings{cheng2019end,
  title={End-to-end safe reinforcement learning through barrier functions for safety-critical continuous control tasks},
  author={Cheng, Richard and Orosz, G{\'a}bor and Murray, Richard M and Burdick, Joel W},
  booktitle={Proceedings of the AAAI conference on artificial intelligence},
  volume={33},
  pages={3387--3395},
  year={2019}
}

@inproceedings{wang2023enforcing,
  title={Enforcing hard constraints with soft barriers: Safe reinforcement learning in unknown stochastic environments},
  author={Wang, Yixuan and Zhan, Simon Sinong and Jiao, Ruochen and Wang, Zhilu and Jin, Wanxin and Yang, Zhuoran and Wang, Zhaoran and Huang, Chao and Zhu, Qi},
  booktitle={International Conference on Machine Learning},
  pages={36593--36604},
  year={2023},
  organization={PMLR}
}

@article{schulman2017proximal,
  title={Proximal policy optimization algorithms},
  author={Schulman, John and Wolski, Filip and Dhariwal, Prafulla and Radford, Alec and Klimov, Oleg},
  journal={arXiv preprint arXiv:1707.06347},
  year={2017}
}

@inproceedings{fisac2019bridging,
  title={Bridging hamilton-jacobi safety analysis and reinforcement learning},
  author={Fisac, Jaime F and Lugovoy, Neil F and Rubies-Royo, Vicen{\c{c}} and Ghosh, Shromona and Tomlin, Claire J},
  booktitle={2019 International Conference on Robotics and Automation (ICRA)},
  pages={8550--8556},
  year={2019},
  organization={IEEE}
}

@article{kochdumper2023provably,
  title={Provably safe reinforcement learning via action projection using reachability analysis and polynomial zonotopes},
  author={Kochdumper, Niklas and Krasowski, Hanna and Wang, Xiao and Bak, Stanley and Althoff, Matthias},
  journal={IEEE Open Journal of Control Systems},
  volume={2},
  pages={79--92},
  year={2023},
  publisher={IEEE}
}

@article{selim2022safe,
  title={Safe reinforcement learning using black-box reachability analysis},
  author={Selim, Mahmoud and Alanwar, Amr and Kousik, Shreyas and Gao, Grace and Pavone, Marco and Johansson, Karl H},
  journal={IEEE Robotics and Automation Letters},
  volume={7},
  number={4},
  pages={10665--10672},
  year={2022},
  publisher={IEEE}
}

@article{yu2023safe,
  title={Safe model-based reinforcement learning with an uncertainty-aware reachability certificate},
  author={Yu, Dongjie and Zou, Wenjun and Yang, Yujie and Ma, Haitong and Li, Shengbo Eben and Yin, Yuming and Chen, Jianyu and Duan, Jingliang},
  journal={IEEE Transactions on Automation Science and Engineering},
  volume={21},
  number={3},
  pages={4129--4142},
  year={2023},
  publisher={IEEE}
}

@article{azar2021drone,
  title={Drone deep reinforcement learning: A review},
  author={Azar, Ahmad Taher and Koubaa, Anis and Ali Mohamed, Nada and Ibrahim, Habiba A and Ibrahim, Zahra Fathy and Kazim, Muhammad and Ammar, Adel and Benjdira, Bilel and Khamis, Alaa M and Hameed, Ibrahim A and others},
  journal={Electronics},
  volume={10},
  number={9},
  pages={999},
  year={2021},
}

@article{xu2024omnidrones,
  title={Omnidrones: An efficient and flexible platform for reinforcement learning in drone control},
  author={Xu, Botian and Gao, Feng and Yu, Chao and Zhang, Ruize and Wu, Yi and Wang, Yu},
  journal={IEEE Robotics and Automation Letters},
  volume={9},
  number={3},
  pages={2838--2844},
  year={2024},
}

@article{kiran2021deep,
  title={Deep reinforcement learning for autonomous driving: A survey},
  author={Kiran, B Ravi and Sobh, Ibrahim and Talpaert, Victor and Mannion, Patrick and Al Sallab, Ahmad A and Yogamani, Senthil and P{\'e}rez, Patrick},
  journal={IEEE transactions on intelligent transportation systems},
  volume={23},
  number={6},
  pages={4909--4926},
  year={2021},
}

@article{han2023survey,
  title={A survey on deep reinforcement learning algorithms for robotic manipulation},
  author={Han, Dong and Mulyana, Beni and Stankovic, Vladimir and Cheng, Samuel},
  journal={Sensors},
  volume={23},
  number={7},
  pages={3762},
  year={2023},
}

@inproceedings{ames2019control,
  title={Control barrier functions: Theory and applications},
  author={Ames, Aaron D and Coogan, Samuel and Egerstedt, Magnus and Notomista, Gennaro and Sreenath, Koushil and Tabuada, Paulo},
  booktitle={European Control Conference},
  pages={3420--3431},
  year={2019},
}

@article{ames2016control,
  title={Control barrier function based quadratic programs for safety critical systems},
  author={Ames, Aaron D and Xu, Xiangru and Grizzle, Jessy W and Tabuada, Paulo},
  journal={IEEE Transactions on Automatic Control},
  volume={62},
  number={8},
  pages={3861--3876},
  year={2016},
}

@inproceedings{ivanov2019verisig,
  title={Verisig: verifying safety properties of hybrid systems with neural network controllers},
  author={Ivanov, Radoslav and Weimer, James and Alur, Rajeev and Pappas, George J and Lee, Insup},
  booktitle={ACM International Conference on Hybrid Systems: Computation and Control},
  pages={169--178},
  year={2019}
}

@inproceedings{zhang2024seev,
  title={SEEV: Synthesis with Efficient Exact Verification for ReLU Neural Barrier Functions},
  author={Zhang, Hongchao and Qin, Zhizhen and Gao, Sicun and Clark, Andrew},
  booktitle={Neural Information Processing Systems},
  pages={101367--101392},
  year={2024}
}

@inproceedings{zhang2023exact,
  title={Exact verification of relu neural control barrier functions},
  author={Zhang, Hongchao and Wu, Junlin and Vorobeychik, Yevgeniy and Clark, Andrew},
  booktitle={Neural Information Processing Systems},
  pages={5685--5705},
  year={2023}
}

@article{tran2019safety,
  title={Safety verification of cyber-physical systems with reinforcement learning control},
  author={Tran, Hoang-Dung and Cai, Feiyang and Diego, Manzanas Lopez and Musau, Patrick and Johnson, Taylor T and Koutsoukos, Xenofon},
  journal={ACM Transactions on Embedded Computing Systems},
  volume={18},
  number={5s},
  pages={1--22},
  year={2019},
}

@inproceedings{jackson2020safety,
  title={Safety verification of unknown dynamical systems via gaussian process regression},
  author={Jackson, John and Laurenti, Luca and Frew, Eric and Lahijanian, Morteza},
  booktitle={IEEE Conference on Decision and Control},
  pages={860--866},
  year={2020},
}

@article{chow2018lyapunov,
  title={A lyapunov-based approach to safe reinforcement learning},
  author={Chow, Yinlam and Nachum, Ofir and Duenez-Guzman, Edgar and Ghavamzadeh, Mohammad},
  journal={Advances in neural information processing systems},
  volume={31},
  year={2018}
}

@inproceedings{kong2015kinematic,
  title={Kinematic and dynamic vehicle models for autonomous driving control design},
  author={Kong, Jason and Pfeiffer, Mark and Schildbach, Georg and Borrelli, Francesco},
  booktitle={2015 IEEE intelligent vehicles symposium (IV)},
  pages={1094--1099},
  year={2015},
  organization={IEEE}
}

@article{barto1983neuronlike,
  title={Neuronlike adaptive elements that can solve difficult learning control problems},
  author={Barto, Andrew G and Sutton, Richard S and Anderson, Charles W},
  journal={IEEE transactions on systems, man, and cybernetics},
  pages={834--846},
  year={1983},
  publisher={IEEE}
}

@article{emam2021safe,
  title={Safe model-based reinforcement learning using robust control barrier functions},
  author={Emam, Yousef and Glotfelter, Paul and Kira, Zsolt and Egerstedt, Magnus},
  journal={arXiv preprint arXiv:2110.05415},
  year={2021}
}

@inproceedings{zhao2023probabilistic,
  title={Probabilistic safeguard for reinforcement learning using safety index guided gaussian process models},
  author={Zhao, Weiye and He, Tairan and Liu, Changliu},
  booktitle={Learning for Dynamics and Control Conference},
  pages={783--796},
  year={2023},
  organization={PMLR}
}

@article{dai2021lyapunov,
  title={Lyapunov-stable neural-network control},
  author={Dai, Hongkai and Landry, Benoit and Yang, Lujie and Pavone, Marco and Tedrake, Russ},
  journal={arXiv preprint arXiv:2109.14152},
  year={2021}
}

@inproceedings{cleaveland2024conformal,
  title={Conformal prediction regions for time series using linear complementarity programming},
  author={Cleaveland, Matthew and Lee, Insup and Pappas, George J and Lindemann, Lars},
  booktitle={Proceedings of the AAAI Conference on Artificial Intelligence},
  volume={38},
  number={19},
  pages={20984--20992},
  year={2024}
}

@inproceedings{shi2025neural,
  title={Neural network verification with branch-and-bound for general nonlinearities},
  author={Shi, Zhouxing and Jin, Qirui and Kolter, Zico and Jana, Suman and Hsieh, Cho-Jui and Zhang, Huan},
  booktitle={International Conference on Tools and Algorithms for the Construction and Analysis of Systems},
  pages={315--335},
  year={2025},
  organization={Springer}
}

@inproceedings{hashemi2023data,
  title={Data-driven reachability analysis of stochastic dynamical systems with conformal inference},
  author={Hashemi, Navid and Qin, Xin and Lindemann, Lars and Deshmukh, Jyotirmoy V},
  booktitle={IEEE Conference on Decision and Control},
  pages={3102--3109},
  year={2023},
}

@inproceedings{muthali2023multi,
  title={Multi-agent reachability calibration with conformal prediction},
  author={Muthali, Anish and Shen, Haotian and Deglurkar, Sampada and Lim, Michael H and Roelofs, Rebecca and Faust, Aleksandra and Tomlin, Claire},
  booktitle={2023 62nd IEEE Conference on Decision and Control (CDC)},
  pages={6596--6603},
  year={2023},
}

@article{lindemann2024formal,
  title={Formal verification and control with conformal prediction},
  author={Lindemann, Lars and Zhao, Yiqi and Yu, Xinyi and Pappas, George J and Deshmukh, Jyotirmoy V},
  journal={arXiv preprint arXiv:2409.00536},
  year={2024}
}

@article{hashemi2024statistical,
  title={Statistical reachability analysis of stochastic cyber-physical systems under distribution shift},
  author={Hashemi, Navid and Lindemann, Lars and Deshmukh, Jyotirmoy V},
  journal={IEEE Transactions on Computer-Aided Design of Integrated Circuits and Systems},
  volume={43},
  number={11},
  pages={4250--4261},
  year={2024},
}

@article{chakraborty2025safety,
  title={Safety Evaluation of Motion Plans Using Trajectory Predictors as Forward Reachable Set Estimators},
  author={Chakraborty, Kaustav and Feng, Zeyuan and Veer, Sushant and Sharma, Apoorva and Ding, Wenhao and Topan, Sever and Ivanovic, Boris and Pavone, Marco and Bansal, Somil},
  journal={arXiv preprint arXiv:2507.22389},
  year={2025}
}

\appendix
\clearpage

\section{Experiment Details}\label{sec:appendix_experiment_details}

\subsection{Benchmarks:}

\paragraph{Cartpole:}
For our Cartpole environment,
we implement the standard dynamics for a cartpole system~\cite{barto1983neuronlike}:
\begin{align*}
    \ddot{\theta} &= \frac{g\sin\theta - \cos\theta\left[\frac{F + ml\dot{\theta}^2\sin\theta}{m_c + m}\right]}{l\left[\frac{4}{3} - \frac{m\cos^2\theta}{m_c + m}\right]} \\
    \ddot{x} &= \frac{F + ml\dot{\theta}^2\sin\theta}{m_c + m} - \frac{ml\ddot{\theta}\cos\theta}{m_c + m}
\end{align*}
where
$x$ is the cart position,
$\theta$ is the pole angle from vertical,
$F$ is the force applied to the cart,
$g=9.8$m/s$^2$ is the gravitational acceleration,
$m_c=1$kg is the cart mass,
$m=0.1$kg is the pole mass,
$l=0.5$m is the half-length of the pole.
The system is discretized using Euler integration with timestep $\Delta t=0.05$ seconds.

The system state is $4$-dimensional: $(x,\dot{x},\theta,\dot{\theta})$ representing cart position, cart velocity, pole angle, and angular velocity.
The control objective is to keep the pendulum upright and maintain cart position. Safety constraints are:
\begin{enumerate}
    \item pendulum angle constraint $|\theta| \leq \theta_{\max} = 0.2$ radians
    \item cart position constraint $|x| \leq x_{\max} = 2.4$m
\end{enumerate}
Initial state region is uniformly sampled from $[-0.05, 0.05]$ for all state variables.
Reward function provides a constant reward of $1$ per timestep when the pole is upright (within constraint).
Cost function is a binary signal, 
with a cost of $1$ when any safety constraint is violated and $0$ otherwise.
The episode terminates immediately upon constraint violation. 

\paragraph{Lane Follow:}
For our lane following environment,
we implement the discrete-bicycle model for vehicle dynamics~\cite{kong2015kinematic}: 
\begin{align*}
    \dot{x} &= v \cos(\theta + \beta) \\
    \dot{y} &= v \sin(\theta + \beta) \\
    \dot{\theta} &= \frac{v}{l_r}\sin(\beta) \\
    \dot{v} &= a \\
    \beta &= \tan^{-1}\left(\frac{l_r}{l_f + l_r}\tan(\delta_f)\right)
\end{align*}
where
$x$ denotes lateral distance from the lane center,
$y$ denotes longitudinal position,
$\theta$ denotes heading angle relative to the lane,
$v$ is the vehicle speed,
$a$ is the acceleration,
$\delta_f$ is the front wheel steering angle,
$\beta$ is the side slip angle,
and $l_f$ and $l_r$ are distances from the center of gravity to front and rear axles respectively. 
Timestep $\Delta t$ is set to $0.05$ seconds.

We set the wheelbase to be $2.9$m. The system state is $3$-dimensional: $(x,\theta,v)$ representing lateral deviation from lane center, heading error, and vehicle speed. 
The objective is to follow the lane centerline. 
Safety constraints are:
\begin{enumerate}
    \item lateral distance constraint $|x| \leq d_{\max}=0.7$m
    \item heading error constraint $|\theta| \leq \theta_{\max} = \pi/4$ radian
\end{enumerate}
The initial state region is $x \in [-0.35, 0.35]$, $\theta \in [-\pi/8, \pi/8]$, and $v \in [2.4, 3.6]$.
Reward function encourages the vehicle to stay close to the center with minimal heading error, specifically:
\(
r = -(x^2 + 0.5 \cdot (\theta/\theta_{\max})^2)+0.1
\).
Cost function is a binary safety signal, with a cost of $1$ when any safety constraint is violated and $0$ otherwise. 
The episode terminates immediately after a constraint violation.

\paragraph{2D Quadrotor:}

For our 2D Quadrotor environment, we follow the settings in~\cite{emam2021safe} for the drone dynamics:
\begin{align*}
    \ddot{x} &= -\frac{(u_1 + u_2)\sin\theta}{m} \\
    \ddot{y} &= \frac{(u_1 + u_2)\cos\theta}{m} - g \\
    \ddot{\theta} &= \frac{(u_1 - u_2)l}{I}
\end{align*}
where
$x$ and $y$ are horizontal and vertical positions,
$\theta$ is the pitch angle,
$u_1$ and $u_2$ are the forces from the two rotors,
$m=1.0$kg is the vehicle mass,
$I=0.01$kg$\cdot$m$^2$ is moment of inertia,
$l=0.25$m is distance from center of mass to rotor,
$g=9.81$m/s$^2$ is gravitational acceleration.
System timestep $\Delta t$ is set to $0.02$ seconds.

The agent has $12$-dimensional observation space
with a $6$-dimensional state space: $(x,y,\theta,\dot{x},\dot{y},\dot{\theta})$ representing position, orientation, and their velocities. 
The other $6$ dimensions in the observation space represents the goal compass direction and distance and closest hazard compass direction and distance.
The objective is to navigate to a target location while avoiding static obstacles.
Safety constraints are:
\begin{enumerate}
    \item position boundaries constraints $x \in [-6,6]$ and $y \in [-4,4]$
    \item pitch angle constraint $|\theta| \leq \theta_{\max} = \pi/3$ radians
    \item hazard distance constraint: distance to closest hazard is greater than max hazard radii plus a small buffer to ensure no collision
\end{enumerate}
Initial state is sampled from $x \in [-4.75, -4.25]$, $y \in [1.75,2.25]$, with zero velocity and orientation.
Target position is at $(5.0,-1.5)$ with success radius $0.3$m.
Reward function encourages progress towards the goal: 
\(r = \Delta d-0.001 ||a||^2 - 0.01 \mathbb{I}_{|\theta|>\pi/6}\)
where $\Delta d$ is reduction in goal distance,
second term punishes large control effort,
and third term punishes large non-vertical angles.
A terminal reward of $10$ is given when agent reaches goal.
Cost function returns $1$ for any safety constraint violation and $0$ otherwise.

\begin{table*}[t]
\begin{tabular}{c|c}
\textbf{Hyperparameters}                            & Settings                                                          \\ \hline
Network Architecture                                & MLP                                                               \\
Hidden Layer Activation Function                    & tanh                                                              \\
Actor/Critic Output Activation Function             & tanh/linear                                                       \\
Optimizer                                           & Adam                                                              \\
Discount factor $\gamma$                            & $0.98$                                                            \\
GAE lambda                                          & $0.97$                                                            \\
Clip ratio                                          & $0.2$                                                             \\
Critic learning rate                                & Linear Decay $\num{1e-3} \rightarrow 0$                                 \\
Actor learning rate                                 & Cosine Decay $\num{8e-4} \rightarrow \num{4e-5}$                              \\
Dynamics learning rate                              & Linear Decay $\num{8e-4} \rightarrow 0$                                 \\
Seeds per algorithm per experiment                  & $10$                                                              \\ \hline
\textbf{ReCORS specific parameters}                 &                                                                   \\
Uncertainty Output Activation Function              & sigmoid                                                           \\
Uncertainty learning rate                           & Cosine Decay $\num{8e-4} \rightarrow \num{4e-5}$                                \\ 
$w_{\text{max}}/w_{\text{improve}}$                 & $0.5/1.0$  \\
$w_1/w_2/w_3$                                       & $1.0/0.5/1.0$ \\
\hline
\textbf{Experiment specific settings}               &                                     \textit{(Hidden Layer Size/Total Env Interactions)}                                                                 \\
Cartpole                                            & $(12,12) \quad / \quad 1e6$                                       \\
LaneFollow                                          & $(12,12) \quad / \quad 6e4$                                     \\
2D Quadrotor                                        & $(256,256,256) \quad / \quad 3e5$ \\
3D Quadrotor                                        & $(256,256,256) \quad / \quad 3e5$                                 \\
2D Quadrotor nonlinear                              & $(256,256) \quad / \quad 3e5$  \\
CarGoal \& HalfCheetah                              & $(256,256) \quad / \quad 9e6$  
\\
\hline
\end{tabular}
\vspace{5pt}
\caption{Hyperparameters and environment settings.}
\label{tab:hyperparameters_and_settings}
\end{table*}

\paragraph{3D Quadrotor:}
For our 3D Quadrotor environment, we follow the settings in~\cite{dai2021lyapunov} for the system dynamics:
\begin{align*}
    \ddot{\mathbf{p}} &= g\mathbf{e}_3 + \frac{1}{m}\mathbf{R}\mathbf{f} \\
    \dot{\mathbf{R}} &= \mathbf{R}\hat{\boldsymbol{\omega}} \\
    \dot{\boldsymbol{\omega}} &= \mathbf{I}^{-1}(\boldsymbol{\tau} - \boldsymbol{\omega} \times \mathbf{I}\boldsymbol{\omega})
\end{align*}
where
$\mathbf{p}=[x,y,z]^T$ is position,
$\mathbf{R}$ is the rotation matrix,
$\boldsymbol{\omega}$ is the angular velocity in body frame,
$\mathbf{f}=[0,0,\Sigma_{i=1}^4,f_i]^T$ is the total thrust vector,
$\boldsymbol{\tau}$ is the torque vector from motor thrusts,
$m=0.468$kg is drone mass,
$\mathbf{I}=\text{diag}(4.9,4.9,4.8)\times 10^{-3}$kg$\cdot$m$^2$ is the inertia matrix,
and $g=9.81$m/s$^2$ is gravitational acceleration. 
Discrete timestep $\Delta t$ of the system is set to $0.02$ seconds.

The agent has $20$-dimensional observation space with a $12$-dimensional state vector:
$(x,y,z,\phi,\theta,\psi,\dot{x},\dot{y},\dot{z},\omega_x,\omega_y,\omega_z)$ representing position, orientation, linear velocities, and angular velocities.
The other $8$ dimensions in the observation space are the goal compass direction and distance and closest hazard compass direction and distance.
The objective is to navigate to a target location while avoiding static obstacles.
Safety constraints are
\begin{enumerate}
    \item position constraints $x,y,z \in [-6,6]$
    \item attitude limits $|\phi|, |\theta| \leq \pi/2.5$ radians
    \item hazard distance constraint: distance to closest hazard is greater than max hazard radii plus a small buffer
\end{enumerate}
Initial state is sampled from $x\in[-4.75,-4.25]$, $y\in[-2.25,-1.75]$, $z\in[0.75,1.25]$ with zero velocity and orientation.
Target goal position is at $(4.5,4.5,3.0)$ with success radius $0.4$.
Reward function encourages progress towards goal: 
\(
r=\Delta d - 0.005||a||^2 -0.05(\phi^2+\theta^2)
\)
plus a terminal reward of $50$ upon reaching goal position.
Cost function gives $1$ for any safety constraint violation and $0$ otherwise.

\paragraph{2D Quadrotor with Nonlinear Safe Distance Constraint:}
the system dynamics and settings are consistent with those documented in the 2D Quadrotor environment.
We modified the agent's observation space to introduce the nonlinear safe distance constraint.
Specifically, the agent still has the 6-dimensional state and a 3-dimensional vector corresponding to goal location in its observation space. 
However, instead of the compass direction and distance to its closest obstacle, it now observes its position. 
Furthermore, the obstacles in this environments are nonstationary.
The nonlinear safe distance constraint is formally defined as:
let $p_a=(x,y)$ denote the agent's horizontal and vertical positions, and $p_o=(x',y')$ denotes those of the obstacle that is closest to the agent at the current step. Then the constraint is
\(
||p_a-p_o||_2 \geq d_{\text{safe}}
\)
where we use $d_{\text{safe}}=0.5$ in our experiments.

\paragraph{Safety Gymnasium:} 
We consider two environments in Safety Gymnasium~\cite{ji2023safety}: CarGoal and HalfCheetah.
For CarGoal, the car agent has 72 dimensional observation space that includes accelerometer, gyro, magnetometer, velocimeter, angles, angle vecolicites, and lidar readings of the goal and hazards. 
The agent's goal is to reach the goal location while avoiding both hazards spaces and fragile objects. 
For HalfCheetah, the agent has 17 dimensional observation space that includes joint angles and their velocities as well as agent's location along $z$-axis and velocities along $x,y$ plane. 
The agent's goal is to move as quickly as possible while adhering to velocity constraint. 


\subsection{Hyperparameters/Other Settings}

Table~\ref{tab:hyperparameters_and_settings} contains all relevant hyperparameters and settings for our experiments.
For Safety Gymnaiusm environments CarGoal and HalfCheetah, we set number of vector environments to $30$ to accelerate explorations. 

\textbf{Compute Resources: }
Our code runs on NVIDIA GeForce RTX 3090 GPU.

\end{document}